\newtheorem{assumption}{Assumption}
\newtheorem{theorem}{Theorem}
\newtheorem{definition}{Definition}
\newenvironment{smalleralign}[1][\small]
 {\par\nopagebreak\leavevmode\vspace*{-\baselineskip}%
  \skip0=\abovedisplayskip
  #1%
  \def\maketag@@@##1{\hbox{\m@th\normalfont\normalsize##1}}%
  \abovedisplayskip=\skip0
  \align}
 {\endalign\ignorespacesafterend}
\icmltitlerunning{Multi-Objective Meta Learning}
\begin{document}

\twocolumn[

\icmltitle{Multi-Objective Meta Learning}



\icmlsetsymbol{equal}{*}

\begin{icmlauthorlist}
\icmlauthor{Feiyang Ye}{equal,to}
\icmlauthor{Baijiong Lin}{equal,to}
\icmlauthor{Zhixiong Yue}{to}
\icmlauthor{Pengxin Guo}{to}
\icmlauthor{Qiao Xiao}{to}
\icmlauthor{Yu Zhang}{to}
\end{icmlauthorlist}

\icmlaffiliation{to}{Department of Computer Science and Engineering, Southern University of Science and Technology, Shenzhen, China}

\icmlcorrespondingauthor{Yu Zhang}{yu.zhang.ust@gmail.com}

\icmlkeywords{Machine Learning, ICML}

\vskip 0.3in
]



\printAffiliationsAndNotice{\icmlEqualContribution} 

\begin{abstract}

Meta learning with multiple objectives can be formulated as a Multi-Objective Bi-Level optimization Problem (MOBLP) where the upper-level subproblem is to solve several possible conflicting targets for the meta learner. However, existing studies either apply an inefficient evolutionary algorithm or linearly combine multiple objectives as a single-objective problem with the need to tune combination weights. In this paper, we propose a unified gradient-based Multi-Objective Meta Learning (MOML) framework and devise the first gradient-based optimization algorithm to solve the MOBLP by alternatively solving the lower-level and upper-level subproblems via the gradient descent method and the gradient-based multi-objective optimization method, respectively. Theoretically, we prove the convergence properties of the proposed gradient-based optimization algorithm. Empirically, we show the effectiveness of the proposed MOML framework in several meta learning problems, including few-shot learning, neural architecture search, domain adaptation, and multi-task learning.
\end{abstract}

\section{Introduction}

In the past few years, deep learning has achieved great success in various fields \cite{pouyanfar2018survey} because it can effectively and efficiently process massive and high-dimensional data. However, training a deep learning model from scratch often requires a large amount of data to learn a large number of model parameters and needs to choose hyperparameters by hand, leading to a huge dependence of data volume and the choice of hyperparameters.

As one way to address those problems by enabling models to learn how to learn, meta learning has attracted considerable attention recently \cite{hospedales2020meta, huisman2020survey}. Meta learning gains knowledge from multiple meta training tasks so that the knowledge can be reused in new tasks or new environments rapidly with a few training examples. Taken broadly, objective functions of meta learning models are usually formulated as a bi-level optimization problem where the lower-level subproblem represents the adaptation to a given task with learned meta parameters and the upper-level subproblem tries to optimize these meta parameters via a meta objective \cite{hospedales2020meta}. Hence, from this view, meta learning has a wide range of applications such as hyperparameter optimization \cite{franceschi2018bilevel}, Neural Architecture Search (NAS) \cite{lsy19}, and Reinforcement Learning (RL) \cite{xu2018meta}.

In many studies on conventional meta learning methods and applications, there is only a single meta objective in the upper-level subproblem. For example, the Model-Agnostic Meta-Learning (MAML) method \cite{finn2017model} only measures the performance on a validation dataset in the upper-level subproblem to evaluate the learned initialization of parameters. DARTS \cite{lsy19}, a differentiable method for NAS, evaluates the performance of the searched architecture on the validation dataset. However, in real world applications, there are usually more than one objective to be considered. For example, for MAML, we may need to consider not only the performance but also the robustness which can help adapt to new tasks with the learned initialization. Similarly, the network size and performance should be balanced in NAS especially when the searched architecture will be deployed to devices with limited resource such as mobile phones. In those applications, we can see that there is a need to balance multiple conflicting objectives in meta learning.

Meta learning with multiple objectives thus has drawn much attention in recent studies. Specifically, some works study specific meta learning problems in the multi-objective case, such as multi-objective NAS \cite{wu2019fbnet, tcpvshl19, cai2019once, lu2020nsganetv2},  multi-objective RL \cite{chen2019meta}, and so on. However, those works simply combine multiple objectives into a single objective with tuned weights or utilize time-consuming evolutionary algorithms that cannot be integrated into gradient-based learning models such as deep neural networks. Objective functions in meta learning with multiple objectives are naturally formulated as a Multi-Objective Bi-Level optimization Problem (MOBLP) where the lower-level subproblem is to learn the adaptation to a task similar to vanilla meta learning and the upper-level subproblem contains multiple objectives for the meta learner. In the optimization community, some works \cite{deb2009solving,sinha2011bilevel, ruuska2012constructing} study the optimization of the MOBLP but with evolutionary algorithms whose convergence property is unclear. Therefore, meta learning with multiple objectives lacks a gradient-based solution
with convergence guarantee.

To fill this gap, in this paper we propose a unified gradient-based Multi-Objective Meta Learning (MOML) framework with convergence guarantee. The MOML framework is formulated as a MOBLP, where the upper-level subproblem is to solve multiple objectives for the meta learner. We devise the first gradient-based optimization algorithm to solve the MOBLP by alternatively solving the lower-level and upper-level subproblems via the gradient descent method and the gradient-based multi-objective optimization method such as MGDA \cite{desideri12}. The convergence properties of this gradient-based method have been proved. To show the effectiveness of the MOML framework, we apply it to several meta learning problems, including few-shot learning, NAS, domain adaptation, and multi-task learning.

The main contributions of this paper are four-fold.
\begin{itemize}[noitemsep,topsep=0pt,parsep=0pt,partopsep=0pt]
\item We propose a unified MOML framework based on the MOBLP and devise a gradient-based optimization algorithm for the MOML framework.
\item We prove the convergence property of the proposed optimization algorithm.
\item We formulate several learning problems as instances of the MOML framework.
\item Experiments on those learning problems show the effectiveness of the MOML framework.
\end{itemize}

\section{Related Work}

\subsection{Meta Learning}

Meta learning, or learning to learn, learns knowledge from multiple tasks and then adapts it to new tasks with a few samples quickly. Many studies in meta learning mainly focus on solving the few-shot learning problem. From this view, meta learning can be divided into three main categories, including metric-based approach \cite{snell2017prototypical, sung2018learning}, model-based approach \cite{li2016learning}, and optimization-based approach \cite{finn2017model, nichol2018first}. For example, as an optimization-based method, MAML learns an initialization of model parameters so that a new task can be learned with a few training samples by fine-tuning the learned initialization. 

A widely-used formulation in meta learning can be cast as a bi-level optimization problem, where the upper-level subproblem is to learn meta parameters according to the meta objectives and the lower-level subproblem is to quickly adapt to new tasks with meta parameters \cite{rajeswaran2019meta,hospedales2020meta}. For example, MAML adapts to a new task by using the associated training dataset and the learned initialization in the lower level, and then updates the initialization according to the validation performance in the upper level. From this perspective, meta learning is a general learning paradigm and has more general applications \cite{hospedales2020meta}.
In this paper, we study meta learning from the perspective of the bi-level optimization.

\subsection{Multi-Objective Optimization}

Multi-objective optimization is to address the problem of optimizing multiple targets simultaneously. Actually, machine learning algorithms often have to simultaneously achieve multiple targets, which may have conflicts with each other. For example, when we train a deep learning model, minimizing the sizes of model parameters and maximizing the classification accuracy are two conflicting objectives. Many machine learning algorithms deal with multiple objectives by simply aggregating them so that the multi-objective optimization problem reduces to a single-objective optimization problem. Recently, with the development of multi-objective optimization algorithms, they have been successfully applied to solve machine learning problems \cite{jin2008pareto}.

There are many kinds of multi-objective optimization algorithms, such as evolutionary algorithms \cite{zhou2011multiobjective}, population-based algorithms \cite{giagkiozis2015overview}, gradient-based algorithms \cite{desideri12, mahapatra2020multi}, and so on. In this work, we focus on gradient-based algorithms because this approach can be easily integrated into gradient-based machine learning models such as deep learning models. A representative method is the Multiple Gradient Descent Algorithm (MGDA) \cite{desideri12}, which leverages multi-objective Karush-Kuhn-Tucker (KKT) conditions \cite{kuhn2014nonlinear} and finds a common direction to decrease all the objectives.  

\section{The MOML Framework}

In this section, we introduce the proposed MOML framework.

\subsection{Notations and Terminologies}

We first define some useful notations and terminologies.

For a multi-objective optimization problem with $m$ objectives, each objective function is denoted by $g_i: \mathbb{R}^n \to  \mathbb{R}$, where the solution space is in the $n$-dimensional space. By combining the $m$ objectives, the resultant vector-valued function $g: \mathbb{R}^n \to  \mathbb{R}^m$ is a mapping from the solution space $\mathbb{R}^n$ to the objective space $\mathbb{R}^m$. A Multi-Objective optimization Problem (MOP) is usually formulated as
\begin{equation} \label{generalMOP}
    \min_z g(z) = (g_1(z),...g_m(z))^T\ \  \mathrm{s.t.} \ z\in \mathcal{Z},
\end{equation}
with $g: \mathbb{R}^n \to  \mathbb{R}^m$ and a nonempty set $\mathcal{Z}\subseteq \mathbb{R}^n$.

Let $P = \mathbb{R}^m_{+}$ be a pointed, closed and convex cone. Then this cone $P$ induces a partial order relation $\le_P$ in $\mathbb{R}^m$. For $l^1,l^2\in \mathbb{R}^m$, the partial ordering $l^1 \le_P l^2$ implies that $l^1_i\le l^2_i$ for all $i\in \{1,...,m\}$, where $l^1_i$ and $l^2_i$ denote the $i$th entry in $l^1$ and $l^2$, respectively.
The strict inequality $l^1<_P l^2$ holds when $l^1_i<l^2_i$ holds for at least one $i$.

In the MOP, a point $l^1 \in C$ is said to be dominated by another point $l^2$ iff $l^2 \le_P l^1$. $l^2 \nless_P l^1$ means that $l^1$ is not dominated by $l^2$.
A point in a set $C\subseteq \mathbb{R}^m$ is a minimal point if it is not dominated by any other points in $C$. Therefore, the set of all minimal points in $C$ w.r.t. the ordering cone $P$ is defined as
$$\mathrm{Min}\ C := \{l^* \in C : \forall l\in C/\{l^*\}, l \nless_P l^* \}.$$
We denote by $\mathrm{Min}\ g(z)$ the set of all the minimal points of a vector-valued function $g$. We also call it as the Pareto frontier or Pareto-optimal set. Thus, the corresponding efficient solution or Pareto-optimal solution of $g(z)$ can be defined as
\begin{equation*}
    \mathrm{Eff}\ (g(z)) := \{z\in \mathcal{Z}:g(z)\in \mathop{\mathrm{Min}}_{z\in\mathcal{Z}}g(z)\}.
\end{equation*}

\subsection{Formulation}

The proposed MOML framework has a unified objective function, which is formulated as a MOBLP, as
\begin{equation}\label{P1}
    \min_{\alpha\in \mathcal{A}, \omega \in \mathbb{R}^p} F(\omega,\alpha) \ ~ \mathrm{s.t.} \ ~ \omega\in \mathcal{S}(\alpha),
\end{equation}
where function $F: \mathbb{R}^p \times \mathbb{R}^n \to \mathbb{R}^m$ is a vector-valued jointly continuous function with
$F:=(F_1,F_2,...,F_m)^T$ for the $m$ desired objectives and $\mathcal{A}$ is a nonempty compact subset of $\mathbb{R}^p$. In problem (\ref{P1}), $\mathcal{S}(\alpha)$ is defined as the set of optimal solutions to minimize $f(\omega,\alpha)$ w.r.t. $\omega$, i.e.,
\begin{equation} \label{P2}
 \mathcal{S}(\alpha) = \mathop{\arg\min}_\omega f(\omega,\alpha).
\end{equation}
When $m$ equals 1, problem (\ref{P1}) reduces to the Bi-Level optimization Problem (BLP), which is a widely-used formulation in meta learning, and hence from this perspective, the MOML framework is a generalization of meta learning. In problems (\ref{P1}) and (\ref{P2}), $F$ is called the Upper-Level (UL) subproblem and $f: \mathbb{R}^p \times \mathbb{R}^n \to \mathbb{R}$ is the Lower-Level (LL) subproblem. For meta learning, $F$ contains multiple meta objectives to be achieved for the meta learner and $f$ defines the objective function for current task such as the training loss. In Section \ref{sec_use_cases}, we will see the use of MOML in different meta learning problems, including few-shot learning, NAS, domain adaptation and multi-task learning.


For a MOBLP such as problem (\ref{P1}), there are some works \cite{deb2009solving,sinha2011bilevel, ruuska2012constructing} to adopt multi-objective evolutionary algorithms to solve it. However, such solutions have a high complexity without convergence guarantee and cannot be integrated with gradient-based models such as deep neural networks. We are unaware of any gradient-based optimization algorithm with convergence guarantee to solve a MOBLP, which is what we will do in the next section.

\section{Optimization}
\label{sec_optimization}

In this section, we devise a general algorithm to solve the MOBLP (i.e., problem \eqref{P1}) and provide convergence analyses under certain assumptions.

\subsection{Lower-Level Singleton Condition}
Due to the complicated dependency between UL and LL variables, solving the MOBLP is challenging, especially when optimal solutions of the LL subproblem are not unique.

For a BLP with a single objective in the UL subproblem, many studies \cite{domke2012generic, franceschi2018bilevel, shaban2019truncated} require a Lower-Level Singleton (LLS) condition that the LL subproblem only admits a unique minimizer for every $\alpha\in \mathcal{A}$, which can simplify the optimization process and convergence analyses.

For the MOBLP, the LLS condition is necessary.
If the LLS condition does not hold, the MOBLP is even ill-defined \cite{eichfelder2020twenty}. To see this, suppose for a fixed $\alpha_0$, we get a set of solutions $S(\alpha_0)$ for the LL subproblem. Since $F$ is vector-valued, it is unclear that at which $\omega\in S(\alpha_0)$ the UL subproblem $F$ should be evaluated. 

With the LLS condition, problem \eqref{P1} can be simplified as
\begin{align}
\min_{\alpha\in\mathcal{X}}&\  \varphi(\alpha) = F(\omega^*(\alpha),\alpha)\nonumber\\ \mathrm{s.t.}&\  \omega^*(\alpha) = \mathop{\arg\min}_\omega f(\omega,\alpha).\label{P4}
\end{align}


\subsection{Gradient-based Optimization Algorithm}

Here we present a gradient-based optimization algorithm to solve problem \eqref{P4}.

Usually, there is no closed form for the solution $\omega^*(\alpha)$ of the LL subproblem and so it is difficult to optimize the UL subproblem directly. Another approach is to use the optimality condition of the LL subproblem (i.e., $\nabla_\omega  f(\omega,\alpha) = 0$) as equality constraints for the UL subproblem in a way similar to \cite{pedregosa2016hyperparameter}. However, this approach only works for LL subproblems with simple forms and cannot work for general learning models.

Here we use a strategy that replaces the LL subproblem with a dynamical system \cite{franceschi2018bilevel,shaban2019truncated}.
Specifically, we consider the following approximated formulation of problem \eqref{P4} as
\begin{equation}\label{P3}
    \min_{\alpha}\varphi_K(\alpha) = F(\omega_K(\alpha),\alpha),
\end{equation}
where $\omega_K(\alpha)$ denotes an iterative solution of the LL subproblem for a given $\alpha$ and $K$ denotes the number of iterations. With an initialization $\omega_0$ for the LL variable, a sequence $\{\omega_k(\alpha)\}_{k=1}^{K}$ can be generated as
\begin{align}
\omega_{k+1}(\alpha) = \mathcal{T}_k(\omega_k(\alpha),\alpha),\ \forall k=1,\ldots, K-1,\nonumber
\end{align}
where $\mathcal{T}_k$ represents an operator to update $\omega$. Here we consider a  first-ordered gradient descent method for $\mathcal{T}_k$, such as the Stochastic Gradient Descent (SGD) method. Therefore, $\mathcal{T}_k$ can be formulated explicitly as
\begin{align}
\mathcal{T}_k(\omega_k(\alpha),\alpha) = \omega_k(\alpha) - \mu\nabla_\omega f(\omega_k(\alpha),\alpha),\nonumber
\end{align}
where $\mu>0$ denotes the step size and $\nabla_\omega f(\omega_k(\alpha),\alpha)$ denotes the derivative of $f$ w.r.t. $\omega$ at $\omega=\omega_k(\alpha)$.


The main advantage of the reformulation in problem (\ref{P3}) is that the UL subproblem becomes an unconstrained MOP.
To solve problem (\ref{P3}), we can adopt any multi-objective optimization algorithm. To make the entire optimization procedure a gradient-based approach, we adopt a simple gradient-based MOP method called Multiple Gradient Descent Algorithm (MGDA) \cite{desideri12}, which shows that the descent direction $d$ for multiple objectives can be found in the convex hull of the gradients of each objective. Specifically, to solve problem (\ref{P3}), MGDA iteratively solves the following quadratic programming problem as
\begin{align}
\min_{\bm{\gamma}} &\ \left\|\sum_{i=1}^m\gamma_i \nabla_{\alpha}F_i(\omega_K(\alpha),\alpha) \right\|^2_2 \nonumber\\
\mathrm{s.t.}&\  \gamma_i \geq 0,\ \sum_{i=1}^m \gamma_i=1,\label{eq:mgda}
\end{align}
where $\|\cdot\|_2$ denotes the $\ell_2$ norm of a vector and $\gamma_i$ can be viewed as a weight for the $i$th objective. To solve problem (\ref{eq:mgda}), different from \cite{sk18} which uses the the Frank-Wolfe algorithm, we adopt the FISTA algorithm \cite{bt09} with a faster convergence rate. 
After solving problem (\ref{eq:mgda}), MGDA can update $\alpha$ by minimizing $\sum_{i=1}^m\gamma_iF_i(\omega_K(\alpha),\alpha)$ via SGD. In fact, we can choose any gradient-based MOP method to solve problem (\ref{P3}) and we choose MGDA because of its simplicity and efficiency.

\renewcommand{\algorithmicrequire}{\textbf{Input:}}
\begin{algorithm}[H]\small
\label{alg1}
\caption{Optimization algorithm for MOML}
\label{alg:MOML}
\begin{algorithmic}[1]
\REQUIRE numbers of iterations ($T$,  $K$), step size ($\mu$, $\nu$)
\STATE Randomly initialized $\alpha_0$;
\FOR{$t=1$ {\bfseries to} $T$}
\STATE Initialize $\omega_0^t(\alpha_t)$;
\FOR{$j=1$ {\bfseries to} $K$}
\STATE $\omega_{j}^t(\alpha_t) \gets \omega_{j-1}^t(\alpha_t) - \mu \nabla_\omega f(\omega_{j-1}^t(\alpha_t),\alpha_t)$;
\ENDFOR
\STATE Compute gradients $\nabla_{\alpha}F_i(\omega_{K}^t(\alpha_t),\alpha_t)$ for all the $i$'s;
\STATE Compute weight $\gamma_1,\ldots,\gamma_m$ by solving problem \eqref{eq:mgda};
\STATE $\alpha_{t+1} = \alpha_t - \nu \sum_{i=1}^m\gamma_i\nabla_{\alpha}F_i(\omega_K^t(\alpha_t),\alpha_t)$;
\ENDFOR
\end{algorithmic}
\end{algorithm}
\vskip -0.15in


The entire algorithm to solve problem (\ref{P4}) is shown in Algorithm \ref{alg:MOML}, which to the best of our knowledge is the first gradient-based optimization algorithm for MOBLPs. In Algorithm \ref{alg:MOML}, we obtain only one solution for MOBLPs, which is different from evolutionary algorithms that can find a population of solutions. How to obtain multiple Pareto-optimal solutions for MOBLPs is beyond the scope of this paper and we will study it in the future work.

\subsection{Convergence Analysis}

In this section, we analyze the convergence of Algorithm \ref{alg:MOML}.

In the following, we introduction some basic assumptions which are widely adopted in BLPs. Firstly, we make the assumption about the LLS condition mentioned earlier.
\begin{assumption} \label{Ass1}
$\mathop{\arg\min}_\omega f(\omega,\alpha)$ is a singleton for every $\alpha\in \mathcal{A}$, and $\{\omega_K(\alpha)\}$ is uniformly bounded on $\mathcal{A}$.
\end{assumption}

Under this assumption, we can get the following result.
\begin{theorem} \label{theorem1}
 If Assumption \ref{Ass1} is satisfied, the vector-valued function $F(\omega^*(\alpha),\alpha)$ is continuous w.r.t. $\alpha$.
\end{theorem}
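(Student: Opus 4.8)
The plan is to reduce the claim to the continuity of the lower-level minimizer map $\alpha \mapsto \omega^*(\alpha)$. Since the formulation stipulates that $F$ is a jointly continuous vector-valued function and since $\varphi(\alpha) = F(\omega^*(\alpha),\alpha)$ is the composition of $F$ with the map $\alpha \mapsto (\omega^*(\alpha),\alpha)$, continuity of $\omega^*(\cdot)$ would immediately give continuity of each $F_i$ and hence of the vector-valued $F(\omega^*(\alpha),\alpha)$. So the entire burden of the proof is to show that the argmin of the LL subproblem depends continuously on $\alpha$.

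To prove continuity of $\omega^*$, I would argue sequentially. Fix an arbitrary $\alpha \in \mathcal{A}$ and take any sequence $\alpha_n \to \alpha$ (recall $\mathcal{A}$ is compact). Under Assumption \ref{Ass1}, the minimizers lie in a bounded set, so $\{\omega^*(\alpha_n)\}$ is bounded and admits convergent subsequences. Let $\bar\omega$ be the limit of an arbitrary convergent subsequence $\omega^*(\alpha_{n_k}) \to \bar\omega$; the goal is to identify $\bar\omega = \omega^*(\alpha)$.

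The identification uses the optimality characterization of the LL subproblem together with continuity of $f$. For every fixed $\omega$ we have $f(\omega^*(\alpha_{n_k}),\alpha_{n_k}) \le f(\omega,\alpha_{n_k})$ by the definition of $\omega^*(\alpha_{n_k})$. Passing to the limit $k\to\infty$ and invoking joint continuity of $f$ (with $\omega^*(\alpha_{n_k}) \to \bar\omega$ and $\alpha_{n_k}\to\alpha$) yields $f(\bar\omega,\alpha) \le f(\omega,\alpha)$ for all $\omega$, so $\bar\omega$ minimizes $f(\cdot,\alpha)$; the singleton part of Assumption \ref{Ass1} then forces $\bar\omega = \omega^*(\alpha)$. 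Since every convergent subsequence of the bounded sequence $\{\omega^*(\alpha_n)\}$ shares the same limit $\omega^*(\alpha)$, the full sequence converges to $\omega^*(\alpha)$, establishing continuity of $\omega^*$ at $\alpha$; as $\alpha$ was arbitrary, $\omega^*$ is continuous on $\mathcal{A}$, and the result follows.

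The step I expect to be the main obstacle is precisely the limit-passing and identification argument, because this is where both halves of Assumption \ref{Ass1} become indispensable: uniform boundedness supplies the compactness needed to extract a convergent subsequence, while the LLS condition is what pins the subsequential limit down to the \emph{unique} minimizer (without uniqueness, $\bar\omega$ could be any point of $\arg\min_\omega f(\omega,\alpha)$, and the argmin map could be genuinely set-valued and discontinuous). One care point is that Assumption \ref{Ass1} states boundedness for the iterate family $\{\omega_K(\alpha)\}$; I would remark that the limiting minimizer inherits this bound, so that $\{\omega^*(\alpha_n)\}$ indeed lives in a fixed compact set. The overall argument is essentially Berge's maximum theorem specialized to the case of a single-valued argmin.
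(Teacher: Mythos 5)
Your proposal is correct and follows essentially the same route as the paper's own proof: extract a convergent subsequence of $\{\omega^*(\alpha_n)\}$ via the boundedness in Assumption \ref{Ass1}, identify its limit with the unique minimizer $\omega^*(\bar\alpha)$, conclude convergence of the full sequence, and finish by joint continuity of $F$. In fact, your limit-passing step using the optimality inequality $f(\omega^*(\alpha_{n_k}),\alpha_{n_k}) \le f(\omega,\alpha_{n_k})$ makes explicit an identification that the paper's proof states only tersely.
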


Because $\mathcal{A}$ is a compact set, Theorem \ref{theorem1} implies the existence of solutions. Theorem \ref{theorem1} and the uniform convergence of $\omega_K(\alpha)$ can further imply the convergence for the solution of the LL subproblem. This result is similar to that of the BLP problem \cite{franceschi2018bilevel}.

For the convergence of the UL subproblem, we need to analyze sets of minimal points of the images of perturbed function $\varphi_K(\alpha)$ and $\varphi(\alpha)$. The convexity of those vector-valued functions can be defined as the P-convex. Moreover, we consider the most natural set convergence under this setting, i.e., the Kuratowski-Painlev\'{e} set-convergence. Please refer to those definitions in Appendix \ref{A}.

In this following, We make another assumption.
\begin{assumption} \label{Ass2}
It is assumed that
\begin{itemize}[noitemsep,topsep=0pt,parsep=0pt,partopsep=0pt]
    \item $F(\cdot,\alpha)$ is uniformly Lipschitz continuous;
    \item The iterative sequence $\{\omega_k(\alpha)\}_{k=1}^K$ converges uniformly to $\omega^*(\alpha)$ on $\mathcal{X}$ as $K \to +\infty$;
    \item $\mathcal{A}$ is a convex set;
    \item $\varphi_K$ is P-convex and $\varphi$ is strictly P-convex.
\end{itemize}
\end{assumption}
Note that the first two items in Assumption \ref{Ass2} are widely used to analyze BLPs \cite{franceschi2018bilevel}, and the last two are adopted in the stability analysis of MOPs \cite{lucchetti2004stability}.

Then we have the following convergence results. 

\begin{theorem}\label{theorem2}
Under Assumptions \ref{Ass1} and \ref{Ass2}, the Kuratowski-Painlev\'{e} set-convergence of both the minimal point set and efficient solution set in Algorithm \ref{alg:MOML} holds, i.e.,
\begin{displaymath}
\mathrm{Min}\ \varphi_K(\alpha) \to \mathrm{Min}\ \varphi(\alpha),\
\mathrm{Eff}\ \varphi_K(\alpha) \to \mathrm{Eff}\ \varphi(\alpha).
\end{displaymath}
\end{theorem}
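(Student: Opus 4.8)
The plan is to view $\varphi_K$ as a vanishing perturbation of the limiting vector-valued objective $\varphi$ and then invoke the stability theory for multi-objective optimization problems under Kuratowski-Painlev\'{e} set-convergence. The crucial bridge is to show that the perturbed objectives $\varphi_K$ converge uniformly to $\varphi$ on $\mathcal{A}$, after which the P-convexity assumptions guarantee convergence of both the minimal point sets and the efficient solution sets.

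First I would establish uniform convergence $\varphi_K \to \varphi$ on $\mathcal{A}$. By the uniform Lipschitz continuity of $F(\cdot,\alpha)$ (first item of Assumption \ref{Ass2}), for every $\alpha$ we have
\[
\|\varphi_K(\alpha)-\varphi(\alpha)\| = \|F(\omega_K(\alpha),\alpha)-F(\omega^*(\alpha),\alpha)\| \le L\,\|\omega_K(\alpha)-\omega^*(\alpha)\|,
\]
where $L$ is the uniform Lipschitz constant. Since $\{\omega_k(\alpha)\}$ converges uniformly to $\omega^*(\alpha)$ on $\mathcal{A}$ (second item of Assumption \ref{Ass2}), the right-hand side tends to zero uniformly in $\alpha$, yielding uniform convergence of $\varphi_K$ to $\varphi$. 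Theorem \ref{theorem1} already supplies continuity of $\varphi=F(\omega^*(\alpha),\alpha)$, and combined with the compactness of $\mathcal{A}$ this ensures the limiting problem is well posed and admits minimal points.

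Next I would verify the hypotheses of the Kuratowski-Painlev\'{e} stability theorem for MOPs \cite{lucchetti2004stability}. The convexity of $\mathcal{A}$ (third item) together with the P-convexity of each $\varphi_K$ and the strict P-convexity of $\varphi$ (fourth item) ensures the image sets behave regularly under perturbation. The set-convergence is then established through its two defining inclusions. The upper (outer) limit inclusion shows that any accumulation point of minimal points (respectively efficient solutions) of $\varphi_K$ is a minimal point (respectively efficient solution) of $\varphi$; this follows from the uniform convergence established above and the closedness of the ordering cone $P$. The lower (inner) limit inclusion shows that every minimal point (respectively efficient solution) of $\varphi$ can be approximated by those of $\varphi_K$, where the strict P-convexity of $\varphi$ is essential to rule out degenerate or non-isolated Pareto structure in the limit.

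The main obstacle is the lower-limit inclusion for both set convergences: proving that minimal points and efficient solutions of the limiting problem $\varphi$ are not lost in the limit. Uniform function convergence alone does not guarantee recovery of the full Pareto frontier, since an arbitrarily small perturbation can in principle dominate or bypass portions of it. Here the strict P-convexity of $\varphi$ does the heavy lifting, forcing efficient solutions to be stable under the vanishing perturbation and enabling the construction of approximating sequences drawn from $\mathrm{Eff}\,\varphi_K$ and $\mathrm{Min}\,\varphi_K$. The upper-limit inclusion, by contrast, is comparatively routine once uniform convergence and closedness of $P$ are in hand.
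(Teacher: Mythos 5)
Your proposal follows essentially the same route as the paper: treat $\varphi_K$ as a vanishing perturbation of $\varphi$, combine the uniform Lipschitz continuity of $F(\cdot,\alpha)$ with the uniform convergence of $\omega_K(\alpha)$ to $\omega^*(\alpha)$ to get convergence of $\varphi_K$ to $\varphi$ (the paper phrases this as continuous convergence, which is what your uniform convergence plus the continuity from Theorem \ref{theorem1} delivers on the compact set $\mathcal{A}$), and then invoke the stability theory of \cite{lucchetti2004stability} under the P-convexity assumptions, verifying the two Kuratowski--Painlev\'{e} inclusions separately.

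However, one step of your sketch is stated incorrectly, though it is repairable with tools you already have. You claim the upper-limit inclusion for minimal points ``follows from the uniform convergence established above and the closedness of the ordering cone $P$,'' and that strict P-convexity is needed only for the lower limit. That is not right: under continuous (or uniform) convergence, a limit of points $l_K \in \mathrm{Min}\ \varphi_K(\alpha)$ is in general only \emph{weakly} minimal for $\varphi$. If the limit $l$ were dominated by some $\varphi(z)$ with equality in some coordinates, the approximations $\varphi_K(z) \to \varphi(z)$ yield no contradiction with the minimality of $l_K$, because the inequalities in the tight coordinates can fail for every finite $K$; the contradiction argument only works against strict domination in all coordinates, i.e., against weak minimality. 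This is exactly why the paper's proof routes through weakly minimal points, establishing $\mathrm{Ls}\mathrm{Min}\ \varphi_K(\alpha) \subseteq \mathrm{Ls}\mathrm{WMin}\ \varphi_K(\alpha) \subseteq \mathrm{WMin}\ \varphi(\alpha)$ and then using the strict P-convexity of $\varphi$ to conclude $\mathrm{WMin}\ \varphi = \mathrm{Min}\ \varphi$; strict P-convexity is thus needed on both sides, not just the lower one. Two smaller omissions: the lower-limit result of \cite{lucchetti2004stability} also requires the recession-cone condition $0^+(\mathcal{A})\cap H_{\varphi} = \{0\}$, which holds trivially here because compactness of $\mathcal{A}$ gives $0^+(\mathcal{A})=\{0\}$, and your treatment of the efficient-solution convergence stays at the level of intention, whereas the paper gives a concrete construction for the lower limit (take pre-images $\alpha_K = \varphi_K^{-1}(l_K)$ of approximating minimal points, use boundedness to extract cluster points, and use strict P-convexity to force the unique cluster point to be $\bar{\alpha}$).
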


Theorem \ref{theorem2} shows that Algorithm \ref{alg:MOML} to solve MOBLPs can converge under Assumptions \ref{Ass1} and \ref{Ass2}.
To satisfy the LLS condition, the LL subproblem could be strongly convex and then $\omega_K(\alpha)$ can converge uniformly to $\omega(\alpha)$ at a linear rate. For the UL subproblem, it has been shown by \cite{desideri12} that when MGDA converges, it can reach a Pareto-stationary point. When the vector-valued function in the UL subproblem is strictly P-convex, it can converge to a Pareto-optimal solution \cite{tanabe2019proximal}.


\section{Use Cases of MOML}
\label{sec_use_cases}

In this section, we introduce several use cases of the MOML framework, including few-shot learning, NAS, domain adaptation, and multi-task learning.

\subsection{Few-Shot Learning}

Few-Shot Learning (FSL) aims to tackle the problem of training a model with only a few training samples \cite{wang2020generalizing}. Recently, FSL is widely studied from the perspective of meta learning by using the prior knowledge in the meta training process. Most studies in FSL only consider the classification performance. 
However, in real world applications, the performance is not the only focus. For example, we expect FSL models to not only have good performance but also be robust to adversarial attacks \cite{pgd17}, which may improve the generalization of FSL models. In the following, we can see that this setting can naturally be modeled by the proposed MOML framework.

\subsubsection{Problem Formulation}


Suppose there are a base dataset $\mathcal{D}_{base}$ with a category set $\mathcal{C}_{base}$ and a novel dataset $\mathcal{D}_{novel}$ with a category set $\mathcal{C}_{novel}$, where $\mathcal{C}_{base} \cap \mathcal{C}_{novel}=\emptyset$. The goal of FSL is to adapt the knowledge learned from $\mathcal{D}_{base}$ to help the learning for $\mathcal{D}_{novel}$. 
In the $i$th meta training episode, we generate from $\mathcal{D}_{base}$ a $N$-way $k$-shot classification task, which consists of a support set $\mathcal{D}_{base}^{s(i)}$ and a query set $\mathcal{D}_{base}^{q(i)}$. 
For the robustness, we add perturbations generated by the Projected Gradient Descent (PGD) method \cite{pgd17} into each data point from $\mathcal{D}_{base}^{q(i)}$ to generate a perturbed query set $\mathcal{D}_{base}^{q(i), adv}$.
The objective function of the FSL model that considers both the performance and the robustness can be formulated as
\begin{smalleralign}
\min_{\alpha}~&\big(\mathcal{L}_{F}(\omega^{*(i)}(\alpha), \alpha, \mathcal{D}_{base}^{q(i)}),\mathcal{L}_{F}(\omega^{*(i)}(\alpha), \alpha, \mathcal{D}_{base}^{q(i), adv})\big)\nonumber \\
\mathrm{s.t.}~&\omega^{*(i)}(\alpha)=\arg\min_{\omega}\mathcal{L}_{F}(\omega, \alpha, \mathcal{D}_{base}^{s(i)}),
\label{eq:fsl}
\end{smalleralign}
where $\omega$ represents model parameters, $\alpha$ denotes the meta parameters to encode common knowledge that can be transferred to novel tasks, and  $\mathcal{L}_{F}(\omega,\alpha,\mathcal{D})$ denotes the average classification loss of a model with model parameters $\alpha$ and meta parameters $\omega$ on a dataset $\mathcal{D}$. In the UL subproblem of problem (\ref{eq:fsl}), the first objective  measures the classification loss on the query set based on $\omega^{*(i)}(\alpha)$ obtained by solving the LL subproblem and the second objective measures the robustness via the classification performance on the perturbed query set. Problem (\ref{eq:fsl}) provides a general formulation, which depends on what $\alpha$ represents, for FSL. To see this, by taking MAML as an example, $\alpha$ can represent the initialization of model parameters shared among tasks and $\omega$ denotes task-specific model parameters. It is easy to see that problem (\ref{eq:fsl}) fits the MOML framework and we can use Algorithm \ref{alg:MOML} to solve it.


\subsubsection{Experiments}
Experiments are conducted on two FSL benchmark datasets, CUB-200-2011 (referred to as CUB) \cite{wah2011caltech} and \textit{mini}-ImageNet \cite{vinyals2016matching}. 
Experimental settings are put in Appendix \ref{sec:setting_FSL}.


As presented in Table \ref{tab:result_FSL}, the average results over 600 testing tasks in terms of the clean classification accuracy and the PGD accuracy show that MOML can find a trade-off solution compared with MAML which only focus on the classification accuracy. Though the classification accuracy of the MOML is slightly lower than that of MAML by around 5\%, the robustness of MOML is greatly improved compared with the MAML (i.e., up to about 8.4 times). 

\begin{table}[!htb]
\vskip -0.2in
\caption{Results of MOML and MAML on two datasets under the PGD attack.}
\resizebox{\linewidth}{!}{
\begin{tabular}{c|c|ccccc}
\toprule
\textbf{Dataset} & \textbf{Setting} & \textbf{Model} &\textbf{Clean Acc.}  & \textbf{PGD Acc.}\\
\midrule
\multirow{4}*{\rotatebox{90}{\textbf{CUB}}} & \multirow{2}*{1-shot}
                            &MAML  &54.41$\pm$0.96  &4.08$\pm$0.41 \\
 ~  & ~                     &MOML  &47.74$\pm$0.76  &25.67$\pm$0.65 \\
\cmidrule{2-5}
~  & \multirow{2}*{5-shot}  &MAML  &76.12$\pm$0.71  &8.95$\pm$0.45 \\
 ~  & ~                     &MOML  &72.97$\pm$0.81  &43.41$\pm$0.92 \\
\midrule
\multirow{4}*{\rotatebox{90}{\textbf{{\scriptsize\textit{mini}-ImageNet}}}} & \multirow{2}*{1-shot}                                   &MAML  &46.58$\pm$0.80  &3.24$\pm$0.24 \\
~ & ~                       &MOML  &40.03$\pm$0.81  &26.14$\pm$0.92 \\
\cmidrule{2-5}
~  & \multirow{2}*{5-shot}    &MAML  &62.85$\pm$0.76  &4.70$\pm$0.28 \\
~ & ~             &MOML  &57.06$\pm$0.74  &38.95$\pm$0.73 \\
\bottomrule
\end{tabular}
}
\label{tab:result_FSL}
\end{table}

\begin{figure} [!htb]
\vskip -0.1in
 \centering
  \subfigure[5-way 1-shot]{\includegraphics[width=0.48\linewidth]{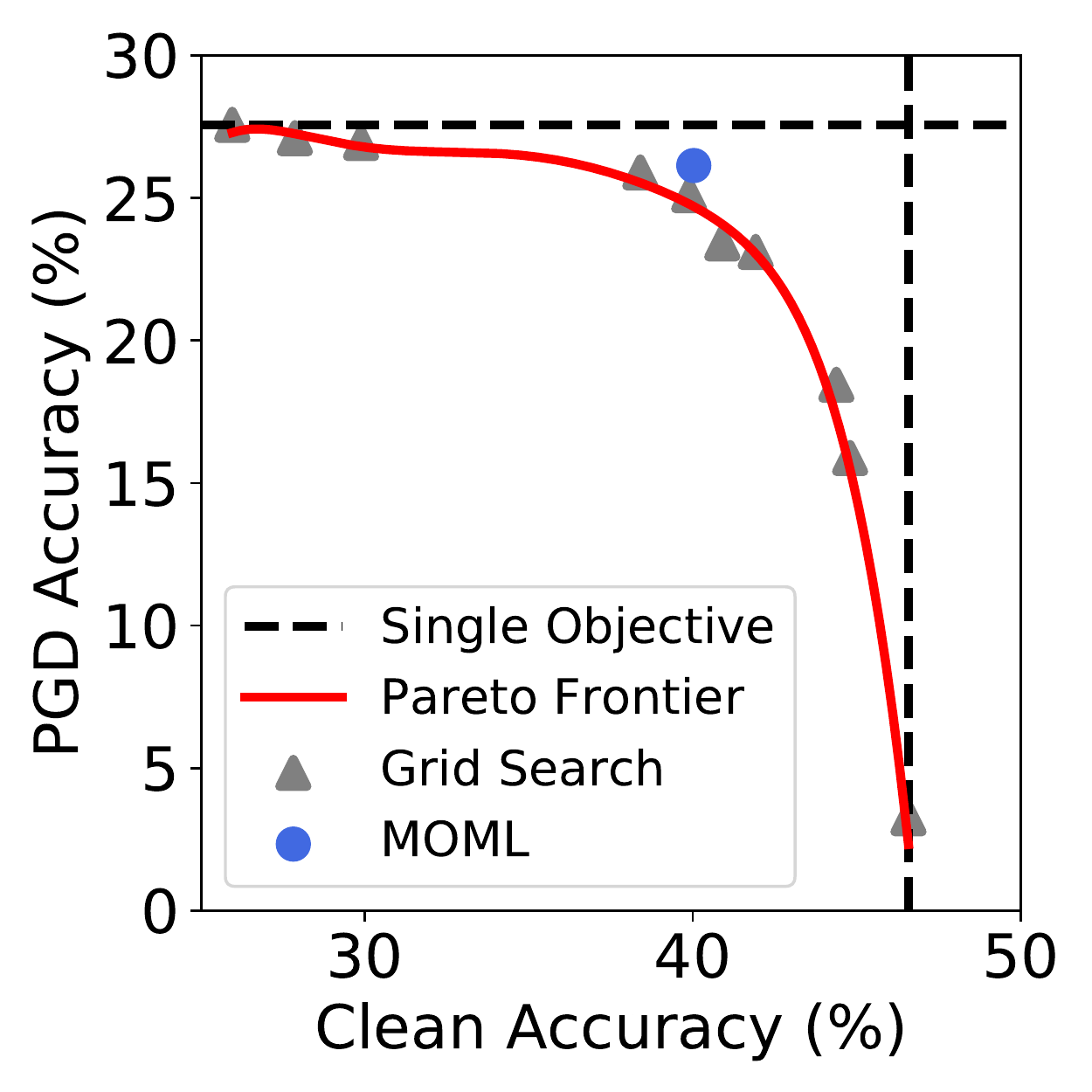}{\label{fig:fig1a}}}
  \subfigure[5-way 5-shot]{\includegraphics[width=0.48\linewidth]{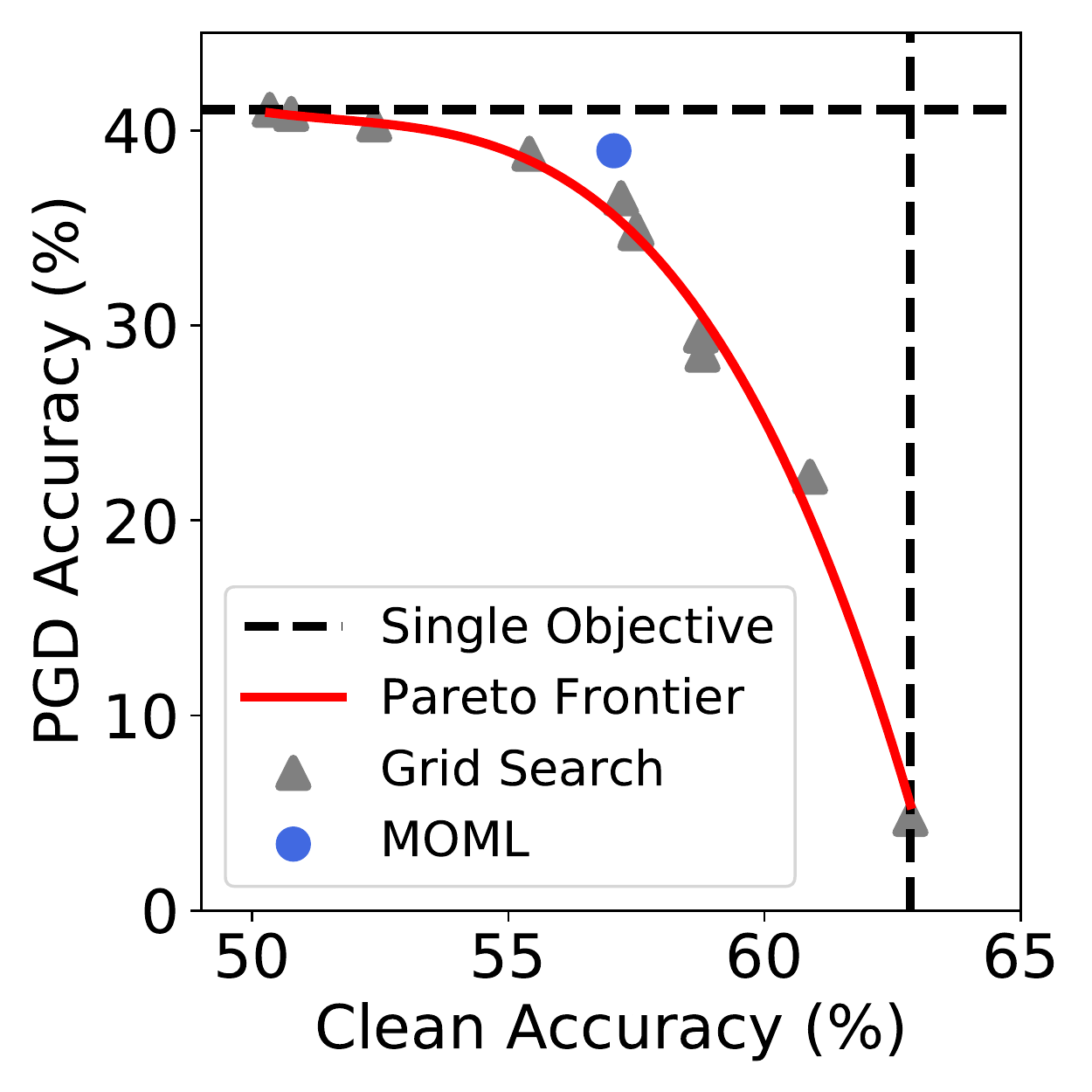}{\label{fig:fig1b}}}
  \vskip -0.1in
  \caption{The grid search results marked in grey triangles and the solution of MOML marked in the blue circle on the \textit{mini}-ImageNet dataset under two settings. The red line represents the approximation of the Pareto frontier.}
\label{fig1}
\vskip -0.2in
\end{figure}

To check whether the MOML can identify a nearly Pareto-optimal solution, we replace the UL subproblem of MOML with a convex sum of all the $m$ objectives via grid search on combination coefficients and then we can find the approximate minimum points set, which is also called the Pareto frontier. According to Figure \ref{fig1}, we can see that the solution of MOML falls very well on the Pareto frontier, which demonstrates the effectiveness of MOML.

\subsection{Neural Architecture Search}
NAS aims to design the architecture of neural networks in an automated way. Most NAS methods focus on searching architectures with the best accuracy. However, in real-world applications, other important factors, such as the network size and robustness, should be considered. To achieve this, we propose a multi-objective NAS method based on the MOML framework.

\subsubsection{Problem Formulation}

By following the DARTS method \cite{lsy19}, in an operation space denoted by $\mathcal{O}$, each element is an operation function $o(\cdot)$ and 
each cell is a  directed acyclic graph with $N$ nodes, where each node represents a hidden representation and each edge $(i,j)$ denotes a candidate operation $o(\cdot)$ with a probability $\alpha_o^{(i,j)}$.
Therefore, $\bm{\alpha}=\{\alpha_o^{(i,j)}\}_{(i,j)\in\bm{E},o\in\mathcal{O}}$ is a representation of the neural architecture, where $\bm{E}$ denotes the set of all the edges in all the cells. The entire dataset is split into a training dataset denoted by $\mathcal{D}_{tr}$ and a validation dataset denoted by $\mathcal{D}_{val}$.


The multi-objective NAS considers three objectives: the classification accuracy, adversarial robustness and the number of parameters, and we formulate three corresponding losses as $\mathcal{L}_{N}(\omega, \bm{\alpha}, \mathcal{D}_{val})$, $\mathcal{L}_{N}(\omega, \bm{\alpha}, \mathcal{D}_{val}^{adv})$, and $\mathcal{L}_{nop}(\bm{\alpha})$, where ${\omega}$ denotes all the model parameters in the neural network, $\mathcal{L}_{N}(\omega, \bm{\alpha}, \mathcal{D})$ denotes the average classification loss on a dataset $\mathcal{D}$ of a neural network with parameters ${\omega}$ and an architecture $\bm{\alpha}$, and $\mathcal{D}_{val}^{adv}$ denotes the perturbed validation dataset by adding perturbations on each data point. 
To formulate $\mathcal{L}_{nop}(\bm{\alpha})$, we denote by $n_{o}$ the number of parameters associated with an operation $o$ and by $N_{nop}(\bm{\alpha})$ the number of parameters in a searched architecture $\bm{\alpha}$. Then $N_{nop}(\bm{\alpha})$ can be computed by $N_{nop}(\bm{\alpha}) = \sum_{(i,j)\in\bm{E}} n^{(i,j)}$,
where $n^{(i,j)}$ is the number of parameters of the searched operation on the edge $(i,j)$. As we determine the operation of each edge by selecting the one with the largest probability, hence we have $n^{(i,j)} = n_{{\arg\max}_{o\in\mathcal{O}}~\alpha^{(i,j)}_o}$. As the $\arg\max$ operation is non-differentiable, we use the softmax function to approximate it and hence $N_{nop}(\bm{\alpha})$ can be approximated as
$\hat{N}_{nop}(\bm{\alpha}) = \sum_{(i,j)\in\bm{E}} \sum_{o\in\mathcal{O}}\frac{\exp(\alpha^{(i,j)}_o)}{\sum_{o^{\prime}\in\mathcal{O}}\exp(\alpha^{(i,j)}_{o^{\prime}})} n_o$. 
To search a network architecture with an expected size $L$, $\mathcal{L}_{nop}(\bm{\alpha})$ can be formulated as
$\mathcal{L}_{nop}(\bm{\alpha}) = |\hat{N}_{nop}(\bm{\alpha})- L|$.

Therefore, the overall formulation for the multi-objective NAS is formulated as
\begin{smalleralign}
\min_{\bm{\alpha}}~& (\mathcal{L}_{N}({\omega}^*(\bm{\alpha}),\bm{\alpha}, \mathcal{D}_{val}), \mathcal{L}_{N}({\omega}^*(\bm{\alpha}),\bm{\alpha}, \mathcal{D}_{val}^{adv}), \mathcal{L}_{nop}(\bm{\alpha}))\nonumber\\
\mathrm{s.t.}~& {\omega}^*(\bm{\alpha}) = \arg\min_{{\omega}}~\mathcal{L}_N({\omega},\bm{\alpha}, \mathcal{D}_{tr}).\label{eq:nas_loss}
\end{smalleralign}
Obviously problem (\ref{eq:nas_loss}) matches the MOML framework. It is easy to see that the DARTS method is a special case of problem (\ref{eq:nas_loss}) when its UL subproblem contains the first objective only and hence problem (\ref{eq:nas_loss}) generalizes the DARTS method by considering two more factors.
Compared with the NSGANetV2 method \cite{lu2020nsganetv2} which is based on a multi-objective bi-level evolutionary algorithm, MOML is more efficient and has convergence guarantee. Moreover, NSGANetV2 uses ensembled surrogate models to predict the accuracy of an architecture, which may incur a performance gap between the UL and LL subproblems. In the LL subproblem of NSGANetV2, it only chooses over 300 candidate architectures for evaluation with a supernet constructed for weight sharing, which may lead to suboptimal solutions. 


\subsubsection{Experiments}


In Table \ref{tab:result_cifar10}, we compare the proposed method with the DARTS method on the CIFAR-10 dataset \cite{krizhevsky2009learning}.
We search for neural networks with different expected sizes (i.e., different $L$'s) via the MOML method. To make the network size searched by DARTS comparable with that of MOML under different settings, we use different numbers of initial channels in DARTS during the evaluation process. Experimental settings are put in Appendix \ref{sec:setting_NAS} and the experimental results of ablation study are put in Appendix \ref{app:nas}.

\begin{table}[!htb]
\vskip -0.1in
\centering
\caption{Comparison between MOML and DARTS on the CIFAR-10 dataset. $\uparrow$ indicates that a larger value is better, while $\downarrow$ implies that a lower value is better. ``$\{\text{DARTS-C}\#\text{channels}\}$" means that the architecture searched by DARTS is evaluated with the initial number of channels as ``channels''. ``$\{\text{MOML-V}\#\text{size}\}$" denotes the architecture searched by MOML with $L$ as ``size''.}
\begin{tabular}{lccccc}
\toprule
\multirow{2}*{\textbf{Architecture}} & \textbf{Params} & \textbf{Clean Acc.}  & \textbf{PGD Acc.}\\
~ & \textbf{(MB)}~$\downarrow$ & \textbf{(\%)}~$\uparrow$ & \textbf{(\%)}~$\uparrow$ \\
\midrule
DARTS-C26    &1.787  &96.91  &28.45 \\
DARTS-C30    &2.354  &97.13  &31.53 \\
DARTS-C34    &2.998  &97.34  &30.31 \\
\midrule
MOML-V1    &1.754  &96.48  &42.66 \\
MOML-V2    &2.367  &97.18  &36.15 \\
MOML-V3    &3.018  &97.25  &35.22 \\
\bottomrule
\end{tabular}
\label{tab:result_cifar10}
\vskip -0.35in
\end{table}

Compared with DARTS, the MOML method with a comparable size improves the robustness and has comparable and even slightly better performance on clean examples. For example, compared MOML-V1 with DARTS-C26, the PGD accuracy increases by about 14\%, while the clean accuracy only drops around 0.5\%.
So experimental results in Table \ref{tab:result_cifar10} show that the MOML method can search more robust architectures with similar model size and comparable classification accuracy than the DARTS method.

\subsection{Semi-Supervised Domain Adaptation}

Semi-Supervised Domain Adaptation (SSDA) aims to address the domain shift between two domains so that the model trained in a label-rich source domain can be adapted to a target domain with limited labeled samples and abundant unlabeled samples \cite{yzdp20}. A widely-used approach for SSDA is to align the distributions of two domains via some measures on the domain discrepancy. There are usually three objectives to be considered, including two training losses on two domains and an alignment loss to measure the domain discrepancy. While existing works such as \cite{yao2015semi, saito2019semi, zhu2020deep} optimize all the objectives by simply computing a weighted sum of them, we formulate the SSDA problem as a multi-objective bi-level problem under the MOML framework.

\subsubsection{Problem Formulation}
Given a source domain $\mathcal{S}$ and a target domain $\mathcal{T}$, the source domain has a large labeled dataset $\mathcal{D}_{\mathcal{S}}$  
and the target domain has a limited labeled dataset   $\mathcal{D}^l_{\mathcal{T}}$ 
as well as a large unlabeled target dataset $\mathcal{D}^u_{\mathcal{T}}$, where $\mathcal{D}_{\mathcal{T}}=\mathcal{D}^l_{\mathcal{T}}\bigcup\mathcal{D}^u_{\mathcal{T}}$ denotes the entire dataset for the target domain. 
The learning model consists of a feature extractor parameterized by $\omega$ and a classifier parameterized by $\psi$. The average classification losses in the source and target domains are represented by $\mathcal{L}_{D}(\omega,\psi,\mathcal{D}_{\mathcal{S}})$ and $\mathcal{L}_{D}(\omega,\psi,\mathcal{D}^l_{\mathcal{T}})$, respectively. The alignment loss denoted by $\mathcal{L}_{M}(\omega, \alpha, \mathcal{D}_{\mathcal{S}}, \mathcal{D}_{\mathcal{T}}^{u})$ is measured by the Maximum Mean Discrepancy (MMD) \cite{gretton2012optimal}, where $\alpha$ is the initialization of $\omega$.
Then we can formulate the SSDA problem under the MOML framework as
\begin{smalleralign}
\min_{\alpha,\psi}~&(\mathcal{L}_{D}(\omega^*,\psi,\mathcal{D}_{\mathcal{S}}), \mathcal{L}_{D}(\omega^*,\psi,\mathcal{D}^l_{\mathcal{T}}), \mathcal{L}_{M}(\omega^*, \alpha, \mathcal{D}_{\mathcal{S}}, \mathcal{D}_{\mathcal{T}}^u)) \nonumber \\
\mathrm{s.t.}~& \omega^*=\arg\min_{\omega}\mathcal{L}_{M}(\omega, \alpha, \mathcal{D}_{\mathcal{S}}, \mathcal{D}_{\mathcal{T}}^{u}),
\label{eq:da_loss}
\end{smalleralign}
where $\omega^*$ in the LL subproblem relies on $\alpha$ and we omit such dependency for the notational simplicity. In the LL subproblem, we aim to learn a feature extractor to align the data distributions in two domains by optimizing $\omega$ with an initialization $\alpha$ and in the UL subproblem, we expect to improve the feature extractor further by updating $\alpha$ in the alignment loss and learn the classifier via minimizing the two classification losses. $\alpha$ acts similar to the parameter initialization in MAML (i.e., $\alpha$ in problem (\ref{eq:fsl})) and helps learn $\omega$ in the LL subproblem. 
Different from \cite{li2018learning, guo2020learning} that consider a single-objective problem by linearly combining multiple objectives, we cast these three objectives as a multi-objective problem in the UL subproblem of MOML. Compared with MAML, problem (\ref{eq:da_loss}) learns both $\psi$ and $\alpha$ simultaneously, but does not require any adaptation on the testing process.

\subsubsection{Experiments}

Experiments are conducted on the Office-31 dataset \cite{saenko2010adapting}, which has 3 domains: Amazon (\textbf{A}), Webcam (\textbf{W}) and DSLR (\textbf{D}). By following \cite{tzeng2014deep,long2017deep}, we construct all six transfer tasks. 
Baseline models in comparison include a deep neural network (denoted by `S+T') that is trained on $\mathcal{D}_{\mathcal{S}}\bigcup\mathcal{D}^l_{\mathcal{T}}$ and the DSAN method \cite{zhu2020deep} that is trained on $\mathcal{D}_{\mathcal{S}}\bigcup\mathcal{D}_{\mathcal{T}}$. We also compare with a variant of the MOML method denoted by MOML$_\emph{w/o MGDA}$ which combines multiple objectives in the UL subproblem as a single objective by tuning the combination weights.
Experimental settings are put in Appendix \ref{sec:setting_SSDA}.



\begin{table}[!htb]
\vskip -0.15in
\caption{Accuracy (\%) on the Office-31 dataset for semi-supervised domain adaptation.}
\centering
\resizebox{\linewidth}{!}{
\begin{tabular}{@{}lccccccc@{}}
\toprule
Method & A$\rightarrow$D & D$\rightarrow$A & A$\rightarrow$W & W$\rightarrow$A & D$\rightarrow$W & W$\rightarrow$D &Avg  \\

\midrule
S+T&  93.58&	74.16&	92.17&	74.08&	98.01&	\textbf{100}&	88.67   \\
DSAN& 93.83&	76.82&	93.59&	75.68&	\textbf{98.43}&	\textbf{100}&	89.73   \\
MOML$_\emph{w/o MGDA}$& \textbf{94.32}&	76.91&	94.16&	\textbf{75.99}&	 97.72&	\textbf{100}&	89.85\\
\textbf{MOML}& 94.08&	\textbf{77.13}&	\textbf{94.59}&	75.96&	98.36&	\textbf{100}&	\textbf{90.02} \\
\bottomrule
\end{tabular}
}
\label{office31-table}
\vskip -0.15in
\end{table}

In each experiment, we randomly selected three labeled samples per class in the target domain for training and the remaining samples are for the unlabeled target dataset. All the labeled samples in the source domain are used for training. Each setting is repeated for three times and the average results are reported in Table \ref{office31-table}. According to the results, we can see that the performance of the DSAN method is better than that of the S+T method. It is because that the alignment between domains in the DSAN method based on the MMD can improve the classification performance in the target domain. Moreover, the MOML$_\emph{w/o MGDA}$ method performs better than the DSAN method, which means that formulating as a bi-level optimization problem for SSDA can improve the performance. Among all the methods in comparison, the proposed MOML method performs the best, which demonstrates the effectiveness of the proposed MOML framework.

\subsection{Multi-Task Learning}
Multi-task learning (MTL) \cite{caruana97,zy17b} aims to improve the performance of multiple tasks simultaneously by leveraging useful information contained in these tasks. Learning the loss weighting is a challenge in MTL and there are some works \cite{kgc18, sk18, ljd19} to solve this problem.
Among those works, the Uncertainty Weighting (UW) method proposed in \cite{kgc18} is only applicable to the square loss as it assumes the Guassian likelihood for the conditional probability that a data point belongs to a class, \citet{sk18} formulate multi-task learning problems from the perspective of multi-objective optimization and implicitly learn the task weights via MGDA, and \citet{ljd19} estimate the task weight of each task as the ratio of the training losses in the last two iterations for the corresponding task.
Different from those works which are all based on single-level optimization problems on the entire training set, we formulate this problem as a multi-objective bi-level optimization problem based on the split of the entire training dataset and solve this problem based on the MOML framework.

\subsubsection{Problem Formulation}
Suppose there are $m$ tasks. The $i$th task has a dataset $\mathcal{D}_i$ for model training. Here each $\mathcal{D}_i$ is partitioned into two subsets: the training dataset $\mathcal{D}^{tr}_i$ and the validation dataset $\mathcal{D}^{val}_i$, where $\mathcal{D}^{tr}_i$ is used to train a multi-task model and $\mathcal{D}^{val}_i$ is to measure the performance of a model. $f(\cdot;\omega)$, the learning function of the multi-task model parameterized by $\omega$, receives data points from the $m$ tasks and outputs predictions. $\alpha_i\in[0,1]$ denotes the loss weight for the $i$th task and $\bm{\alpha}=(\alpha_1,\ldots,\alpha_m)^T$.
The goal is to jointly learn the loss weighting $\bm{\alpha}$ and the model parameter $\omega$. The objective function of the proposed method under the MOML framework is formulated as
\begin{smalleralign}[\footnotesize]
\min_{\bm{\alpha}} &\ \big(\mathcal{L}_{MTL}(\omega^*(\bm{\alpha}),\mathcal{D}^{val}_1),\ldots,\mathcal{L}_{MTL}(\omega^*(\bm{\alpha}),\mathcal{D}^{val}_m)\big)\label{MTL_obj}\\
\mathrm{s.t.} &\ {\omega}^*(\bm{\alpha})=\arg\min_{{\omega}} \sum_{i=1}^m \alpha_i\mathcal{L}_{MTL}(\omega,\mathcal{D}^{tr}_i),\
0\le \alpha_i\le 1\ \forall i,\nonumber
\end{smalleralign}
where $\mathcal{L}_{MTL}(\omega,\mathcal{D})=\frac{1}{|\mathcal{D}|}\sum_{(\mathbf{x},y)\in\mathcal{D}}\ell(f(\mathbf{x;\omega}),y)$ denotes the average loss of $f(\cdot;\omega)$ on a dataset $\mathcal{D}$ with $|\mathcal{D}|$ denoting the size of $\mathcal{D}$ and $\ell(\cdot,\cdot)$ denoting a loss function.

\subsubsection{Experiments}
Experiments are conducted on the Office-31 and Office-Home \cite{venkateswara2017deep}  datasets. 
Baseline methods in the comparison include Single-Task Learning (STL) \cite{johnson2016perceptual} and Deep Multi-Task Learning (DMTL) with different loss weighting strategies such as Equal Weights (EW), UW \cite{kgc18}, Dynamic Weight Average (DWA) \cite{ljd19} with the temperature parameter $T$ as 2, and MGDA \cite{sk18}. All the MTL models use the same hard-sharing or multi-head network architecture. Experimental settings are put in Appendix \ref{sec:setting_MTL} and experimental results on the Office-31 dataset are put in Appendix \ref{app:office-31}.

\begin{table}[!htbp]
\vskip -0.2in
\caption{Accuracy (\%) on the Office-Home dataset.}
\centering
\resizebox{\linewidth}{!}{
\begin{tabular}{|c|c|c|c|c|c|c|c|}
\hline
\multirow{2}{*}{Method} & {Weighting} & \multicolumn{4}{c|}{Task} &\multirow{2}{*}{Avg}\\
\cline{3-6}
~& Strategy &Ar&Cl&Pr&Rw&\\
\hline
 {STL}& -  & 66.48 & 80.64 & \textbf{90.68} & 80.43 & 79.56\\
 \cline{1-7}
 \multirow{4}{*}{DMTL} & EW  & 68.09 & \textbf{80.72} & 89.41 & 80.36 & 79.65\\
 \cline{2-7}
   ~ & UW  & 67.55 & 79.41 & 89.19 & 77.95 & 78.53 \\
 \cline{2-7}
  ~ & DWA  & 65.28 & 79.41 & 89.51 & 79.14 & 78.33 \\
   \cline{2-7}
 ~ & MGDA  & 64.14 & 78.11 & 89.62 & 79.68 & 77.89 \\
  \cline{1-7}
 {MOML}& - & \textbf{69.64} & 80.39 & 90.15 & \textbf{81.08} & \textbf{80.31} \\
\hline
\end{tabular}
}
\label{tbl:mtl-officehome}
\vskip -0.1in
\end{table}

Experimental results on the Office-Home dataset are shown in Table \ref{tbl:mtl-officehome}. According to the results, we can see that MOML outperforms STL and DMTL with different weighting strategies in many cases, which demonstrates the effectiveness of the MOML method. For task \textbf{Pr}, the MOML method is the best among all the MTL methods but all the MTL methods are inferior to STL. This may be because that the learning of the other three tasks (i.e., tasks \textbf{Ar}, \textbf{Cl}, and \textbf{Rw}) hinders the learning of task \textbf{Pr}. However, training the four tasks together can improve the performance of tasks \textbf{Ar} and \textbf{Rw}, which makes MOML outperform STL in tasks \textbf{Ar} and \textbf{Rw}. Moreover, among all the MTL models that learn task weights, MOML is the only one that can outperform the EW strategy in the average sense, which may be beneficial from the bi-level optimization formulation in MOML (i.e., problem \eqref{MTL_obj})  since the validation loss is a more accurate estimation of the generalization loss than the training loss.

\section{Conclusions}

As a generalization of meta learning based on the bi-level formulation, a simple MOML framework based on multi-objective bi-level optimization is proposed in this paper. In the MOML framework, the upper-level subproblem takes multiple objectives of learning problems into consideration. To solve the objective function of the MOML framework, a gradient-based optimization algorithm is proposed and the convergence analysis of this algorithm is studied. Moreover, several use cases of the MOML framework are investigated that demostrates the effectiveness of the MOML framework. In our future work, we will apply the MOML framework to more learning problems.

\bibliography{MOML}
\bibliographystyle{icml2021}


%


\newpage

\twocolumn[
\icmltitle{Appendix for ``Multi-Objective Meta Learning''}
\vskip 0.3in
]

\begin{appendix}
\section{Additional Definitions} \label{A}

In this section, we give more definitions about convexity of vector-valued function and Kuratowski-Painlev\'{e} set-convergence \cite{lucchetti2006convexity}.



\begin{definition}

$P = \mathbb{R}^m_{+} \subseteq \mathbb{R}^m$ is a pointed, closed and convex cone.  $g: \mathbb{R}^n \to \mathbb{R}^m$ is a P-convex function if for every $z_1,z_2\in \mathbb{R}^n$ and for every $\lambda \in [0,1]$, we have
$$g(\lambda z_1+(1-\lambda)z_2) \le_P \lambda g(z_1)+(1-\lambda)g(z_2).$$
This means the the inequality $g_i(\lambda z_1+(1-\lambda)z_2) \le \lambda g_i(z_1)+(1-\lambda)g_i(z_2)$ holds for all $i\in \{1,...,m\}$, where $g_i(\cdot)$ denotes the $i$th entry in $g(\cdot)$.

$g(z)$ is strictly P-convex function, if for every $z_1,z_2\in \mathbb{R}^n$, $z_1\not = z_2$ and for every $\lambda \in (0,1)$, the inequality
$$g_i(\lambda z_1+(1-\lambda)z_2) < \lambda g_i(z_1)+(1-\lambda)g_i(z_2)$$
holds for all $i\in \{1,...,m\}$.
\end{definition}

\begin{definition} \label{ADD2}
Consider $\{A_n\}$ as a sequence of subsets of an Euclidean space. The set $\mathrm{Li}\  A_n$ is defined as the lower limit of the sequence of sets $\{A_n\}$, that is,
\begin{align*}
    \mathrm{Li}\ A_n:=& \{a\in A: a = \lim_{n\to +\infty}a_n, a_n\in A_n, \\
    & \text{for sufficiently large $n$} \} .
\end{align*}
The set $\mathrm{Ls}\ A_n$ is defined as the upper limit of the sequence of sets $\{A_n\}$, that is,
\begin{align*}
    \mathrm{Ls}\ A_n:= \{&a\in A: a = \lim_{n\to +\infty}a_n, a_n\in A_{n_k}, \\
    & \text{for $n_k$ as a selection of the integers.} \} .
\end{align*}
A sequence $\{A_n\}$ converges in the Kuratowski sense to the set $A$, when
$$\mathrm{Ls}\  A_n \subseteq A \subseteq \mathrm{Li}\  A_n,$$
and we denote this convergence by $A_n\to A$.
\end{definition}

\section{Proofs of Theorems in Section \ref{sec_optimization}}
For the sake of clarity, we firstly introduce some notations from \cite{lucchetti2006convexity}.

The sublevel set of the function $g(z): \mathbb{R}^p \to \mathbb{R}^n$ at height $h \in \mathbb{R}^n$ is defined as
$$g^h := \{z\in \mathbb{R}^p : g(z)\le_P h\}.$$
If $A$ is a closed convex set, then the recession cone of A is defined as
$$0^+(A):=\{d\in\mathbb{R}^p: a+td\in A, \forall a \in A,\forall t \ge 0\}.$$
The recession cone of the sublevel set of the function $g(z)$ is denoted by $H_g$.

We introduce an important concept for vector-valued functions, the weakly minimal point.
\begin{definition}
Consider a vector-valued function $g(z): \mathbb{R}^n \to  \mathbb{R}^m$ with $z\in\mathcal{Z}$. If a point $z^*$ is a weakly minimal point of $g(z)$, then there is no $z\in \mathcal{Z}$ with $g_i(z)<g_i(z^*)$ for all $i\in\{1,...m\}$. We denote by $\mathrm{WMin}\ g(z)$ the set of weakly minimal points of the vector-valued function $g$ on $\mathcal{Z}$.
Then, the corresponding weakly efficient solution set can be defined as
\begin{equation*}
    \mathrm{WEff}\ (g(z)) := \{z\in \mathcal{Z}:g(z)\in \mathop{\mathrm{WMin}}_{z\in \mathcal{Z}}g(z)\}.
\end{equation*}
\end{definition}

Clearly, for a given function $g(z)$, we have $\mathrm{Min}\ g(z) \subseteq \mathrm{WMin}\ g(z) $. Moreover, if $g$ is strictly P-convex, we have $\mathrm{Min}\ g(z) = \mathrm{WMin}\ g(z) $ and $\mathrm{WEff}\ (g(z))= \mathrm{Eff}\ (g(z))$.


To prove theorems in Section \ref{sec_optimization}, we prove the following theorems based on the stability analysis of MOPs \cite{lucchetti2004stability}.

\begin{theorem} \label{add1}
$\mathcal{Z}$ is a nonempty closed, convex set in $\mathbb{R}^n$, $g(z): \mathbb{R}^n \to \mathbb{R}^m$ is a vector-valued function with $z\in \mathcal{Z}$. Then if $g_n(z)\to g(z)$ w.r.t. the continuous convergence, we have
$$\mathrm{Ls}\mathrm{WMin}\ g_n(z) \subseteq \mathrm{WMin}\ g(z).$$
\end{theorem}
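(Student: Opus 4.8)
The plan is to establish this upper (limit superior) inclusion directly from the definition of $\mathrm{Ls}$ together with the hypothesis of continuous convergence $g_n\to g$; convexity will turn out to play no role in this direction. Fix an arbitrary $l^*\in\mathrm{Ls}\,\mathrm{WMin}\ g_n$. By Definition \ref{ADD2} there are a selection of indices $n_k$ and points $l^{n_k}\in\mathrm{WMin}\ g_{n_k}$ with $l^{n_k}\to l^*$, and since each $l^{n_k}$ is a weakly minimal value it is attained, say $l^{n_k}=g_{n_k}(z^{n_k})$ for some feasible $z^{n_k}\in\mathcal{Z}$. To conclude I must show that $l^*$ is an attained value of $g$ which is not strictly dominated, i.e. that $l^*\in\mathrm{WMin}\ g$.

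First I would show that weak minimality survives the limit, which is where continuous convergence enters most directly. Suppose, for contradiction, that some feasible $\bar z\in\mathcal{Z}$ strictly dominates $l^*$ in every coordinate, i.e. $g_i(\bar z)<l^*_i$ for all $i\in\{1,\dots,m\}$, and set $\delta=\min_i\,(l^*_i-g_i(\bar z))>0$. Applying continuous convergence to the constant sequence $z\equiv\bar z$ gives $g_{n_k}(\bar z)\to g(\bar z)$, and $l^{n_k}\to l^*$ by construction; hence for all sufficiently large $k$ both perturbations are smaller than $\delta/2$ in each coordinate, so $g_{n_k,i}(\bar z)<l^{n_k}_i$ for every $i$. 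This exhibits a feasible point strictly dominating $l^{n_k}$ under $g_{n_k}$, contradicting $l^{n_k}\in\mathrm{WMin}\ g_{n_k}$. Thus no such $\bar z$ exists and $l^*$ is weakly minimal for $g$. The reason this argument is clean is that weak minimality uses coordinatewise strict inequalities, so the positive gap $\delta$ is robust to the vanishing perturbations coming from $g_{n_k}-g$ and from $l^{n_k}-l^*$.

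It remains to verify that $l^*$ is genuinely a value of $g$, i.e. $l^*\in g(\mathcal{Z})$. Here I would pass to a convergent subsequence $z^{n_{k_j}}\to z^*$ of the attaining points; closedness of $\mathcal{Z}$ gives $z^*\in\mathcal{Z}$, and continuous convergence then yields $g_{n_{k_j}}(z^{n_{k_j}})\to g(z^*)$. Since the left-hand side equals $l^{n_{k_j}}\to l^*$, we obtain $l^*=g(z^*)\in g(\mathcal{Z})$. Together with the previous paragraph this gives $l^*\in\mathrm{WMin}\ g$, and as $l^*$ was arbitrary the inclusion $\mathrm{Ls}\,\mathrm{WMin}\ g_n\subseteq\mathrm{WMin}\ g$ follows.

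The step I expect to be the main obstacle is this last attainment argument: continuous convergence controls $g_n$ only along convergent argument sequences, and by itself it does not force the attaining points $z^{n_k}$ to remain bounded, so a priori the limit value $l^*$ might escape $g(\mathcal{Z})$. I would close this gap by invoking the boundedness of the feasible region that is available in our setting — the compactness of $\mathcal{A}$ together with the uniform boundedness in Assumption \ref{Ass1} plays exactly this role when Theorem \ref{add1} is applied to $\varphi_K$ and $\varphi$ — which confines $\{z^{n_k}\}$ to a compact set and supplies the required convergent subsequence. The weak-minimality step, by contrast, needs nothing beyond continuous convergence, and neither step uses the convexity of $\mathcal{Z}$ or P-convexity; those hypotheses are reserved for the harder reverse inclusion involving $\mathrm{Li}$ and the recession cone $H_g$.
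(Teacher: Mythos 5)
Your core argument is exactly the paper's proof: the paper also fixes $l\in\mathrm{Ls}\,\mathrm{WMin}\ g_n$, assumes a feasible $z$ with $g_i(z)<l_i$ for all $i$, and uses continuous convergence (applied to a sequence $z_k\to z$) together with $l_k\to l$ to produce, for large $k$, a point strictly dominating $l_k$ under $g_{n_k}$, contradicting $l_k\in\mathrm{WMin}\ g_{n_k}$. Where you genuinely go beyond the paper is the attainment step. The paper's proof passes directly from ``$l\notin\mathrm{WMin}\ g$'' to ``there exists $z\in\mathcal{Z}$ with $g_i(z)<l_i$ for all $i$,'' which is valid only if $l$ is already known to be a value of $g$ on $\mathcal{Z}$ (or if $\mathrm{WMin}$ is read as the set of non-strictly-dominated points without an attainment requirement); the question of whether $l\in g(\mathcal{Z})$ is never addressed. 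Your third paragraph closes exactly this gap, and your diagnosis is accurate: continuous convergence alone does not keep the attaining points $z^{n_k}$ bounded, so the theorem as stated leaves this step unsupported, while in the only place the theorem is used --- $\varphi_K\to\varphi$ on the compact set $\mathcal{A}$ under Assumption \ref{Ass1} --- compactness supplies the convergent subsequence and closedness of the feasible set finishes the argument. Your observation that neither convexity of $\mathcal{Z}$ nor P-convexity is used in this direction is also consistent with the paper, which reserves those hypotheses for the lower inclusion in Theorem \ref{add2}. In short: same route for the non-domination step, plus a repair of an attainment gap that the paper's own proof glosses over.
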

\begin{proof}
For $l\in \mathrm{LsWMin}\ g_n(z)$, there exists a subsequence $\{l_k\}$ in $\mathrm{WMin}\ g_{n_k}(A)$ such that $l_k\to l$. Here we use $l_{ki}$ to represent the $i$th entry of the vector $l_k$.

We assume that $l\not\in \mathrm{WMin}\ g(z)$. Then there exists $z\in \mathcal{Z}$ such that $g_i(z)< l_i$ for all $i\in\{1,...m\}$. Since $g_n$ continuously converges to $g$, for a sequence $\{z_k\}$ in $A$ satisfying $z_k \to z$, we have $g_{n_k}(z_k)\to g(z)$. Thus, for a sufficiently large $n$, $g_{n_{k},i}(x_k)<l_{ki}$ for all $i\in\{1,...m\}$, where $g_{n_{k},i}(\cdot)$ denotes the $i$th entry in $g_{n_{k}}(\cdot)$. This shows a contradiction with the fact that $l_k \in \mathrm{WMin}\ g_{n_k}(z)$. So $l \in \mathrm{WMin}\ g(z)$ and we reach the conclusion.
\end{proof}

\begin{theorem} \label{add2}
$\mathcal{Z}$ is a nonempty closed, convex set in $\mathbb{R}^n$ and $z\in \mathcal{Z}$, $g_n(z)\to g(z)$ w.r.t. the continuous convergence. Then if $g_n(z)$ and $g(z)$ are both P-convex functions and $0^+(A)\cap H_g = \{0\}$, we have
$$\mathrm{Min}\ g(z) \subseteq \mathrm{Li}\mathrm{Min}\ g_n(z).$$
\end{theorem}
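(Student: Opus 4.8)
The plan is to establish this inner (lower-limit) inclusion by showing that every minimal value of $g$ is the limit of a sequence of minimal values of $g_n$. Fix $l^\ast \in \mathrm{Min}\ g(z)$ and choose an efficient solution $\bar z \in \mathcal{Z}$ with $g(\bar z) = l^\ast$. The construction I would use is to look, for each $n$, at the perturbed sublevel set $S_n := \{ z \in \mathcal{Z} : g_n(z) \le_P g_n(\bar z) \}$, which is nonempty since $\bar z \in S_n$. Any $\hat z_n$ that is minimal for $g_n$ restricted to $S_n$ is automatically minimal for $g_n$ on all of $\mathcal{Z}$: if some $z' \in \mathcal{Z}$ satisfied $g_n(z') \le_P g_n(\hat z_n) \le_P g_n(\bar z)$, then $z' \in S_n$ too, contradicting minimality over $S_n$. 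Setting $l_n := g_n(\hat z_n) \in \mathrm{Min}\ g_n(z)$, I get $l_n \le_P g_n(\bar z)$, and continuous convergence gives $g_n(\bar z) \to g(\bar z) = l^\ast$.

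To guarantee that each $\hat z_n$ exists and that the values $l_n$ remain bounded, I would invoke the recession hypothesis. The sublevel set of $g$ at height $l^\ast$ intersected with $\mathcal{Z}$ has recession cone $0^+(\mathcal{Z}) \cap H_g = \{0\}$, so it is a closed convex set with trivial recession cone, hence compact, and a continuous P-convex function attains minimal points over a compact set. The delicate point is transferring this compactness to the perturbed sets $S_n$, since the recession condition is assumed only for $g$; this is the step I expect to be the main obstacle. I would handle it by contradiction: if the $S_n$ were not eventually contained in a common compact set, pick $z_n \in S_n$ with $\|z_n\| \to \infty$, normalize the directions $(z_n - \bar z)/\|z_n - \bar z\|$ to a unit limit $d \in 0^+(\mathcal{Z})$, and use P-convexity together with continuous convergence to pass the inequality $g_n(z_n) \le_P g_n(\bar z)$ along the whole ray into the limit, concluding $d \in H_g$ and contradicting $0^+(\mathcal{Z}) \cap H_g = \{0\}$. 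Convexity is exactly what lets the bound along the sequence become a bound along an entire recession ray of the limiting sublevel set.

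With boundedness secured, the convergence $l_n \to l^\ast$ follows by a subsequence argument. The $l_n$ lie in a compact set, so any subsequence admits a further convergent one, say $l_{n_k} \to \tilde l$. Since $\mathrm{Min}\ g_n(z) \subseteq \mathrm{WMin}\ g_n(z)$, Theorem~\ref{add1} gives $\tilde l \in \mathrm{Ls}\,\mathrm{WMin}\ g_n(z) \subseteq \mathrm{WMin}\ g(z)$, so in particular $\tilde l \in g(\mathcal{Z})$. Passing to the limit in $l_n \le_P g_n(\bar z)$ and using closedness of the order cone yields $\tilde l \le_P l^\ast$. But $l^\ast$ is minimal, so no point of $g(\mathcal{Z})$ can satisfy $\tilde l \le_P l^\ast$ with $\tilde l \ne l^\ast$; hence $\tilde l = l^\ast$. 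As every subsequential limit equals $l^\ast$, the bounded sequence $\{l_n\}$ converges to $l^\ast$, with $l_n \in \mathrm{Min}\ g_n(z)$. By the definition of the lower limit this gives $l^\ast \in \mathrm{Li}\,\mathrm{Min}\ g_n(z)$, and since $l^\ast \in \mathrm{Min}\ g(z)$ was arbitrary we conclude $\mathrm{Min}\ g(z) \subseteq \mathrm{Li}\,\mathrm{Min}\ g_n(z)$.
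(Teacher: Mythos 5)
Your proof is correct, but it is genuinely different from what the paper does: the paper does not prove Theorem~\ref{add2} at all, it simply defers to Theorems 3.1 and 3.2 of \cite{lucchetti2004stability}, whereas you give a self-contained argument. Your route is the natural direct one: fix $l^*=g(\bar z)\in\mathrm{Min}\ g(z)$, use the sublevel-set trick (minimal points of $g_n$ over $S_n=\{z\in\mathcal{Z}: g_n(z)\le_P g_n(\bar z)\}$ remain minimal over all of $\mathcal{Z}$, since any dominating point must itself lie in $S_n$), establish eventual uniform boundedness of the $S_n$ by the recession-cone contradiction (convex combinations $\bar z+\lambda_n(z_n-\bar z)$ along a diverging sequence, continuous convergence, and closedness of $P$ push the bound onto the whole ray $\bar z+td$, forcing a unit vector $d\in 0^+(\mathcal{Z})\cap H_g$), and then pin down $l_n\to l^*$ by a subsequence argument combining Theorem~\ref{add1} with Pareto minimality of $l^*$; every step checks out, including the delicate transfer of compactness from $g$ to the perturbed problems, which is exactly where P-convexity of the $g_n$ (not just of $g$) is needed. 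What the two approaches buy: the paper's citation is short but opaque—the reader must verify that the hypotheses of the cited stability theorems (phrased there via asymptotic functions and convergence of the upper images $g_n(\mathcal{Z})+P$) really match Assumptions \ref{Ass1} and \ref{Ass2}—while your proof makes explicit where each hypothesis (convexity of $\mathcal{Z}$, P-convexity of $g_n$ and $g$, continuous convergence, closedness of the cone, and the condition $0^+(\mathcal{Z})\cap H_g=\{0\}$) actually enters, and relies only on the paper's own Theorem~\ref{add1}. One small simplification available to you: in the final step you do not really need Theorem~\ref{add1}; since the $\hat z_{n_k}$ lie in a common compact set, a convergent subsequence $\hat z_{n_k}\to\hat z$ gives $\tilde l=g(\hat z)\in g(\mathcal{Z})$ directly by continuous convergence, and minimality of $l^*$ then forces $\tilde l=l^*$ as in your argument.
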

\begin{proof}
This results can be directly obtained from Theorems 3.1 and 3.2 of \cite{lucchetti2004stability}.
\end{proof}

\subsection{Proof of Theorem \ref{theorem1}}

\begin{proof}
To show that $F(\omega(\alpha),\alpha)$ is continuous on $\alpha$, we need to prove that for any convergent sequence $\alpha_n\to \bar{\alpha}$, $F(\omega^*(\alpha_n),\alpha_n)$ converges to $F(\omega^*(\bar{\alpha}),\bar{\alpha})$.


Suppose that $\{\alpha_n\}$ is a sequence in $\mathcal{A}$ satisfying $\alpha_n\to \bar{\alpha}$. Since $\mathop{\arg\min}_\omega f(\omega,\alpha)$ is a singleton, we have $\omega^*(\alpha_n) = \mathop{\arg\min}_\omega f(\omega,\alpha_n)$.

Since $\{\omega^*(\alpha)\}$ is bounded for $\alpha\in\mathcal{A}$, there exists a convergent subsequence $\{\omega^*(\alpha_{kn})\}$ such that $\omega^*(\alpha_{kn})\to \bar{\omega}$ for some $\bar{\omega} \in \mathbb{R}^p$.
As $\alpha_{kn}\to \bar{\alpha}$, $\omega^*(\bar{\alpha})$ is the minimizer of the LL objective $f(\omega,\bar{\alpha})$. Therefore, we obtain $\omega^*(\bar{\alpha})= \bar{\omega}$. This means $\{\omega^*(\alpha_{kn})\}$ has only one cluster point $\omega^*(\bar{\alpha})$. Thus, $\omega^*(\alpha_n)$ converges to $\omega^*(\bar{\alpha})$ as $\alpha_n\to \bar{\alpha}$. Because $F$ is jointly continuous, we have $F(\omega^*(\alpha_n),\alpha_n)\to F(\omega^*(\bar{\alpha}),\bar{\alpha})$ as $\alpha_n\to \bar{\alpha}$.
\end{proof}

\subsection{Proof of Theorem \ref{theorem2}}
\begin{proof}
To prove the first claim of Theorem \ref{theorem2}, we firstly show that $\varphi_K(\alpha)$ continuously converges to $\varphi(\alpha)$. Suppose there exists a sequence $\{\alpha_n\}$ in $\mathcal{A}$ satisfying $\alpha_n\to \alpha$. Then for any $\varphi_K(\alpha)$ and sequence $\alpha_n$, we have
\begin{smalleralign}[\footnotesize]
    \Vert \varphi_K(\alpha_n) - \varphi(\alpha)\Vert =& \Vert F(\omega_K(\alpha_n),\alpha_n) -  F(\omega^*(\alpha),\alpha) \Vert \nonumber\\\le& \Vert F(\omega_K(\alpha_n),\alpha_n) -  F(\omega^*(\alpha_n),\alpha_n) \Vert \nonumber\\
    &\ ~ +\Vert F(\omega^*(\alpha_n),\alpha_n) -  F(\omega^*(\alpha),\alpha) \Vert\nonumber
\end{smalleralign}
According to the continuity property in Theorem \ref{theorem1}, we have $F(\omega^*(\alpha_n),\alpha_n) \to  F(\omega^*(\alpha),\alpha)$ as $\alpha_n\to \alpha$. Furthermore, because $F(\cdot,\alpha)$ is uniformly Lipschitz continuous, we have
\begin{smalleralign}[\footnotesize]
    \Vert \varphi_K(\alpha_n) - \varphi(\alpha_n)\Vert =& \Vert F(\omega_K(\alpha_n),\alpha_n) -  F(\omega^*(\alpha_n),\alpha_n) \Vert \nonumber\\
     \le& L \Vert \omega_K(\alpha_n) - \omega^*(\alpha_n) \Vert.\nonumber
\end{smalleralign}
According to Assumption \ref{Ass2}, $\omega_K(\alpha)$ converges to $\omega^*(\alpha)$ uniformly as $K \to +\infty$. Therefore, $\varphi_K(\alpha)$ continuously converges to $\varphi(\alpha)$.

Since $\mathrm{Min}\ \varphi(\alpha) \subseteq \mathrm{WMin}\ \varphi(\alpha)$ and Theorem \ref{add1}, we have the following set relations as
\begin{equation}\label{set1}
    \mathrm{Ls}\mathrm{Min}\ \varphi_K(\alpha) \subseteq \mathrm{Ls}\mathrm{WMin}\ \varphi_K(\alpha)
\subseteq \mathrm{WMin}\ \varphi(\alpha).
\end{equation}
Because $\mathcal{A}$ is a compact convex set in $\mathbb{R}^n$, $0^+(\mathcal{A})=\{0\}$. Then, the condition  $0^+(\mathcal{A})\cap H_{\varphi} = \{0\}$ is naturally satisfied for function $\varphi(\alpha)$. According to Assumption \ref{Ass2}, $\varphi(\alpha)$ and $\varphi_K(\alpha)$ are both P-convex functions. Then we obtain the lower part of the set convergence from Theorem \ref{add2} as
\begin{equation}\label{set2}
\mathrm{Min}\ \varphi(\alpha) \subseteq \mathrm{Li}\mathrm{Min}\ \varphi_K(\alpha)
\subseteq \mathrm{Li}\mathrm{WMin}\ \varphi_K(\alpha).\end{equation}
Because $\varphi(\alpha)$ is strictly P-convex, we have $\mathrm{WMin}\ \varphi = \mathrm{Min}\ \varphi$ and then we get $\mathrm{Min}\ \varphi_K(\alpha) \to \mathrm{Min}\ \varphi(\alpha)$ according to Definition \ref{ADD2}.

For the second claim, let $\alpha_n \in \mathrm{Eff}\ \varphi_K(\alpha)$ and $\alpha_n \to \bar{\alpha}$. Since $\mathrm{Min}\ \varphi_K(\alpha) \to \mathrm{Min}\ \varphi(\alpha)$, we get $\varphi_K(\alpha_n) \to \varphi(\bar{\alpha})$ and $\bar{\alpha} \in \mathrm{Min}\ \varphi(\alpha)$, which implies $\mathrm{LsEff}\ \varphi_K(\alpha) \subseteq \mathrm{Eff}\  \varphi(\alpha)$.

For the lower limit, by defining $\bar{\alpha} \in \mathrm{Eff}\ \varphi(\alpha)$, the corresponding minimal point satisfies $\bar{l}=\varphi(\bar{\alpha})\in \mathrm{Min}\ \varphi(\alpha)$. Based on the first claim of this theorem, there exists a sequence $\{l_K\}$ in $\mathrm{Min}\ \varphi_K(\alpha)$ such that $l_K \to \bar{l}$. Then we can take a bounded sequence $\{\alpha_K\}$, where $\alpha_K = \varphi_K^{-1}(l_K)$ and the subsequence of $\{\alpha_K\}$ has a cluster point. Because $\varphi(\alpha)$ is strictly P-convex, this cluster point is $\bar{\alpha}$. Then, we have $\alpha_K \to \bar{\alpha}$, which implies $\mathrm{Eff}\ \varphi(\alpha) \subseteq \mathrm{LiEff}\  \varphi_K(\alpha)$. Combined with the upper limit convergence, we can get $\mathrm{Eff}\ \varphi_K(\alpha) \to \mathrm{Eff}\ \varphi(\alpha)$.
\end{proof}

In fact, if we consider the weakly minimal points under Assumptions \ref{Ass1} and \ref{Ass2}, we can still obtain similar convergence results to those in Theorem \ref{theorem2}, i.e.,
\begin{smalleralign}[\footnotesize]
\mathrm{WMin}\ \varphi_K(\alpha) \to \mathrm{WMin}\ \varphi(\alpha),\ \nonumber
\mathrm{WEff}\ \varphi_K(\alpha) \to \mathrm{WEff}\ \varphi(\alpha).   \nonumber
\end{smalleralign}
Since $\varphi(\alpha)$ is strictly P-convex, the first claim can be directly obtained from the set relations in Eqs. \eqref{set1} and \eqref{set2}. Then, the proof of the convergence of the weakly efficient solution follows that of Theorem \ref{theorem2}.

\section{Experimental Settings for Use Cases of MOML}

\subsection{Few-Shot Learning}
\label{sec:setting_FSL}

Experiments are conducted on two FSL benchmark datasets, CUB-200-2011 (referred to as CUB) \cite{wah2011caltech} and \textit{mini}-ImageNet \cite{vinyals2016matching}. The CUB dataset contains 200 classes and 11,788 images in total. Following \cite{hilliard2018few}, we randomly split this dataset into a base dataset containing 100 classes, a validation dataset containing another 50 classes, and a novel dataset containing the rest 50 classes. The \textit{mini}-ImageNet dataset contains 100 classes with 600 images per class, sampling from the ImageNet dataset \cite{deng2009imagenet}. By following \cite{ravi2016optimization}, this dataset is partitioned into 64, 16, and 20 classes for the base, validation, and novel datasets, respectively.

For both MAML and MOML methods, each task is a 5-way $k$-shot classification problem, where $k=1$ or $5$. The input images are resized to $84\times 84$ for both two datasets and applied data augmentation including random crop, random horizontal flip, and color jitter. A four-layer convolutional neural network (Conv-4) is used as the backbone, which consists of four blocks each of which consists of a convolution layer with 64 kernels of size $3 \times 3$, stride 1, and zero padding, a batch normalization layer, a ReLU activation function, and a max-pooling layer with the pooling size $2\times 2$. After the backbone, a linear layer with 5 neurons is used as a classifier to output the prediction for the input image. The Adam optimizer \cite{kingma2014adam} with the learning rate $0.001$ is used.

In the meta training, we randomly sample $k$ and 16 instances per class as the support set and the query set, respectively, in each episode. 
The adversarial attack on the query set is performed by the PGD attack with a perturbation size $\epsilon =2/255$ and it takes $7$ iterative steps with the step size of $2.5\epsilon$. In the meta testing, we generate 600 5-way $k$-shot tasks from $\mathcal{D}_{novel}$, where each task has $k$ samples for the adaptation and 16 samples for testing. The final results is the average on all the 600 testing tasks. We compare with MAML since problem (\ref{eq:fsl}) can reduce to MAML when there is only the first objective in its UL subproblem.

\subsection{NAS}
\label{sec:setting_NAS}

The search space and training procedure of MOML adopt the same settings as DARTS \cite{lsy19}. Specifically, in both normal and reduction cells, the set of operations $\mathcal{O}$ contains eight operations, including $3\times3$ separable convolutions, $5\times5$ separable convolutions, $3\times3$ dilated separable convolutions, $5\times5$ dilated separable convolutions, $3\times3$ max pooling, $3\times3$ average pooling, identity, and zero. Half of the training set is used for training a model, and the other half is for the validation. A small network of 8 cells is trained with the batch size as 64 and 16 initial channels for 50 epochs. The Adam optimizer \cite{kingma2014adam} with the learning rate $3\times 10^{-4}$, the momentum $\beta=(0.5, 0.999)$, and the weight decay $1\times 10^{-3}$ is used to update $\bm{\alpha}$ in the UL subproblem. The SGD optimizer with the decayed learning rate down from $0.025$ to $0$ by a cosine schedule, the momentum $0.9$, and the weight decay $3\times 10^{-4}$ is used to update $\omega$ in the LL subproblem.

In the evaluation stage, a neural network of 20 searched cells is trained on the full training set for 600 epochs with the batch size as 96, the initial number of channels as 36, the length of a cutout as 16, the dropout probability as 0.2, and auxiliary towers of weight as 0.4. The full testing set is used for testing.  Adversarial examples are generated using the PGD attack with the perturbation size $\epsilon = 1/255$ and the PGD attack takes 10 iterative steps with the step size of $2.5\epsilon$ as suggested in \cite{pgd17}.

\subsection{Semi-Supervised Domain Adaptation}
\label{sec:setting_SSDA}

We use the ResNet-50  model \cite{he2016deep} pretrained on the ImageNet dataset as the backbone network followed by a Fully-Connected (FC) layer. 
The same network structure is used for all baseline methods. For all experiments, the SGD optimizer with the learning rate $0.001$, the momentum $0.9$ and the weight decay $5\times 10^{-4}$ is used for optimization. The batch size is set to 96, including 32 images in the source, labeled target, and unlabeled target domains, respectively. 

\subsection{Multi-Task Learning}
\label{sec:setting_MTL}


The ResNet-50 pretrained on the ImageNet dataset is used as the backbone to extract features. Based on the extracted features, the multi-task learning model adopts the widely used hard-sharing or equivalently multi-head structure, that is, it has a two-layer fully-connected architecture with the ReLU activation function, where the first layer is shared by all tasks to learn a common representation and the second layer is for task-specific outputs. The model is trained by the Adam optimizer \cite{kingma2014adam} with the learning rate as $0.0001$. Both the Office-Home and Office-31 datasets are split into three parts, including 60\% for training, 20\% for validation, and the remaining 20\% for testing.
For the Office-31 dataset, we set the training batch size and the validation batch size to 32 for all tasks. For the Office-Home dataset, we set the training batch size to 16 and the validation batch size to 32 for all tasks.

\section{Ablation Study on NAS} \label{app:nas}

Here we compare MOML with a variant of the MOML method by replacing the MOP in the UL problem with a linearly combined single-objective problem with equal weights $1$, which is denoted by MOML$_\emph{w/o MGDA}$. For MOML$_\emph{w/o MGDA}$, we adopt the same experimental settings as the MOML method. The comparison results are shown in Table \ref{tbl:nas_ablation}. When $L$ equals 1 or 2, MOML$_\emph{w/o MGDA}$ searches smaller architectures than MOML. To make the network size searched by MOML$_\emph{w/o MGDA}$ comparable with that of MOML, we use different numbers of initial channels in MOML$_\emph{w/o MGDA}$ during the evaluation process. Compared with MOML, the MOML$_\emph{w/o MGDA}$ method with a comparable size has lower clean accuracy and robustness in most cases, which demonstrates the effectiveness of the MGDA used.

\begin{table}[!htbp]
\caption{Comparison between MOML and MOML$_\emph{w/o MGDA}$ on the CIFAR-10 dataset. $\uparrow$ indicates that a larger value is better, while $\downarrow$ implies that a lower value is better. ``$\{\text{MOML$_\emph{w/o MGDA}$-V\#size-C\#channels}\}$" means that the architecture searched by MOML$_\emph{w/o MGDA}$ with $L$ as ``size'' is evaluated by the initial number of channels as ``channels''. ``$\{\text{MOML-V}\#\text{size}\}$" denotes the architecture searched by MOML with $L$ as ``size''.}
\resizebox{\linewidth}{!}{
\begin{tabular}{lccccc}
\toprule
\multirow{2}*{\textbf{Architecture}} & \textbf{Params} & \textbf{Clean Acc.}  & \textbf{PGD Acc.}\\
~ & \textbf{(MB)}~$\downarrow$ & \textbf{(\%)}~$\uparrow$ & \textbf{(\%)}~$\uparrow$ \\
\midrule
MOML$_\emph{w/o MGDA}$-V1-C38    &1.750  &96.36  &40.20 \\
MOML$_\emph{w/o MGDA}$-V2-C42    &2.402  &97.03  &31.44 \\
MOML$_\emph{w/o MGDA}$-V3-C36    &3.018  &97.18  &35.36 \\
\midrule
MOML-V1    &1.754  &96.48  &42.66 \\
MOML-V2    &2.367  &97.18  &36.15 \\
MOML-V3    &3.018  &97.25  &35.22 \\
\bottomrule
\end{tabular}
}
\label{tbl:nas_ablation}
\end{table}

\section{Experimental Results on the Office-31 Dataset for Multi-Task Learning} \label{app:office-31}

Experimental results on the Office-31 dataset are shown in Table \ref{tbl:mtl-office31}. According to the results, we can see that MOML outperforms STL and DMTL with different weighting strategies in most cases, which demonstrates the effectiveness of the MOML method. For task \textbf{A}, the MOML method is the best among all the MTL methods but all the MTL methods are inferior to STL. This may be because that the learning of the other two tasks (i.e., tasks \textbf{W} and \textbf{D}) hinders the learning of task \textbf{A}. However, training the three tasks together can improve the performance of task \textbf{D}, which makes all MTL methods outperform STL in task \textbf{D}. Moreover, among all the MTL models, MOML is the only one which performs not worse than DMTL with the EW strategy in each task, which may be beneficial from the bi-level optimization formulation in MOML which uses the validation loss to estimate the generalization loss more accurately.

\begin{table}[!htbp]
\caption{Accuracy (\%) on the Office-31 dataset for multi-task learning.}
\centering
\resizebox{\linewidth}{!}{
\begin{tabular}{|c|c|c|c|c|c|c|}
\hline
\multirow{2}{*}{Method} & {Weighting} & \multicolumn{3}{c|}{Task} &\multirow{2}{*}{Avg}\\
\cline{3-5}
~& Strategy &A&D&W&\\
\hline
 {STL}& -  & \textbf{89.06} & 96.72 & 98.89 & 94.89 \\
 \cline{1-6}
 \multirow{4}{*}{DMTL} & EW  & 87.35 & \textbf{99.18} & 98.89 & 95.14 \\
 \cline{2-6}
   ~ & UW  & 86.50 & 97.54 & {97.78} & 93.94 \\
 \cline{2-6}
  ~ & DWA  & 86.67 & \textbf{99.18} & 97.22 & 94.36 \\
   \cline{2-6}
 ~ & MGDA  & {81.88} & 97.54 & 98.89 & 92.77 \\
  \cline{1-6}
 {MOML}& - & 88.03 & \textbf{99.18} & \textbf{99.44} & \textbf{95.55} \\
\hline
\end{tabular}
}
\label{tbl:mtl-office31}
\end{table}

\end{appendix}
\end{document}